\newtheorem{theorem}{Theorem}
\newtheorem{lemma}{Lemma}
\newtheorem{remark}{Remark}
\newcommand{\com}[1]{\textbf{\color{red}(comment: #1)}} 
\newcommand{\res}[1]{\textbf{\color{magenta}(RESPONSE: #1)}} 
\newcommand{\com}[1]{}
\newcommand{\res}[1]{}
\title{Stabilizing Self-Consuming Diffusion Models with Latent Space Filtering}
\author {
    Zhongteng Cai\textsuperscript{\rm 1},
    Yaxuan Wang\textsuperscript{\rm 2},
    Yang Liu\textsuperscript{\rm 2},
    Xueru Zhang\textsuperscript{\rm 1}
}
\begin{document}

\maketitle

\begin{abstract}
As synthetic data proliferates across the Internet, it is often reused to train successive generations of generative models. This creates a ``self-consuming loop" that can lead to training instability or \textit{model collapse}. Common strategies to address the issue---such as accumulating historical training data or injecting fresh real data---either increase computational cost or require expensive human annotation. In this paper, we empirically analyze the latent space dynamics of self-consuming diffusion models and observe that the low-dimensional structure of latent representations extracted from synthetic data degrade over generations. Based on this insight, we propose \textit{Latent Space Filtering} (LSF), a novel approach that mitigates model collapse by filtering out less realistic synthetic data from mixed datasets. Theoretically, we present a framework that connects latent space degradation to empirical observations. Experimentally, we show that LSF consistently outperforms existing baselines across multiple real-world datasets, effectively mitigating model collapse without increasing training cost or relying on human annotation.
\end{abstract}

\section{Introduction}
\label{sec:intro}

Diffusion models have emerged as a leading class of generative models in computer vision, achieving state-of-the-art performance in image generation tasks~\cite{beat_gan}. By learning to reverse a noise-injection process, these models can synthesize high-quality samples~\cite{ddpm, estimate_grad}. Compared to earlier approaches such as variational autoencoders (VAEs)~\cite{vae} and generative adversarial networks (GANs)~\cite{gan}, diffusion models offer greater training stability, improved sample fidelity~\citep{beat_gan}, and broader applicability across diverse domains. 

Modern diffusion models are typically trained on large-scale datasets scraped from the Internet. However, as synthetic data proliferates online, it inevitably becomes part of the training data for future generations of models, creating a ``self-consuming training loop." Recent studies have shown that such self-consuming loops may lead to model collapse~\cite{recrusion,accumulate,mad,collapse,wei2025selfconsuming}, training instability~\cite{stable}, and potential bias amplification~\cite{amplification,FairnessFeedback,xie2024automating}. 
To mitigate these issues, several strategies have been proposed, including: (i) accumulating all historical (real and synthetic) samples into the training dataset \citep{accumulate,news_bias_amplify}; (ii) injecting sufficient fresh real data during each generation cycle~\citep{mad,remove,recrusive_stability}; (iii) modifying the training process in each round---for example, by utilizing prior knowledge of the real distribution~\citep{self_correction}, merging sequentially trained models~\citep{sims,self_improve_merging}, or leveraging feedback on synthetic data~\citep{verification}. However, these approaches often incur substantial storage and computational costs, require expensive and reliable human annotations, or are limited to single-step retraining rather than the full multi-step self-consuming process.


In this work, we address the above limitations and propose an alternative strategy to mitigate model collapse without introducing additional storage overhead, training cost, or reliance on human annotations. Unlike existing works that study self-consuming generative models in input space, we shift the focus to latent space, aiming to understand the causes of model collapse through the lens of latent representations. The idea is to first investigate how the representations of model-generated data evolve in latent space across successive self-consuming training loops. Based on this insight, we develop a filtering mechanism that mitigates model collapse by identifying and removing less realistic data from the mixed training set at each round of the self-consuming process.

Specifically, we use diffusion models to extract latent representations for real and synthetic images sampled at each round of the self-consuming training loop \citep{dynamic}. For each generation, we compute Orthogonal Low-rank Embedding (OLE) scores~\citep{ole} of the latent representations, which quantify the orthogonality of class-specific subspaces and reflect the underlying low-dimensional structure. Our key observation is that the low-dimensional structure of latent representations from synthetic images gradually degenerates as self-consuming training progresses, whereas real images tend to exhibit more orthogonal subspaces. 
Motivated by this observation, we introduce \emph{Latent Space Filtering} (LSF), a method that identifies and filters synthetic images with degraded latent representations before they are used in further training. Specifically, we train a representation probing classifier for real images based on their latent embeddings. The confidence score from this classifier serves as a proxy for how well a data sample aligns with the low-dimensional manifold of real data. Samples with low alignment are discarded, preventing them from introducing additional noise into the model's learned latent subspaces. In addition to the algorithmic development, we establish a theoretical foundation for the proposed method, which explains how the degeneration of low-dimensional structures correlates with changes in both the OLE scores and the probing classifier's confidence scores. 

Our main contributions are summarized below. More related work is discussed in Appendix~\ref{related}. Conclusions and limitations are provided in Appendix~\ref{appendix:conclusion}.

\begin{itemize}
    \item In Section~\ref{sec:background}, we formalize the problem of studying self-consuming diffusion models in latent space. To the best of our knowledge, this is the first work to investigate how the low-dimensional structure of latent representations evolves across the self-consuming training loop.
    \item In Section~\ref{sec:observation}, we analyze how the low-dimensional structure of latent representations---measured by Orthogonal Low-rank Embedding (OLE) scores---evolves across successive generations of self-consuming diffusion models.
    
    \item Motivated by empirical observations, Section~\ref{sec:method} proposes \textit{Latent Space Filtering} (LSF), a practical approach that mitigates model collapse by filtering low-quality synthetic data based on subspace alignment, without requiring additional real data collection or increased training cost.
    \item We establish a theoretical framework to justify both empirical observations and the proposed method. This framework explains how the degeneration of low-dimensional structures correlates with changes in OLE scores (Theorem~\ref{theorem:ole_lob}) and in the confidence scores of the probing classifier used to assess subspace alignment (Theorem~\ref{theorem:score}).

    \item In Section~\ref{sec:exp}, we extensively evaluate the proposed method across multiple real-world datasets\footnote{Source code available at \url{https://github.com/osu-srml/Latent-Space-Filtering}}. The results demonstrate that LSF effectively mitigates model collapse and outperforms existing baselines, without the need for a growing training set or additional curated data.
    
\end{itemize}
\section{Problem formulation}
\label{sec:background}
Consider a platform that iteratively trains diffusion models using a sequence of training datasets. Let $f^{(k)}$ denote the diffusion model trained on dataset $\mathcal{D}^{(k)}$ during the $k$-th round of the iterative retraining loop, and let $\widehat{\mathcal{D}}^{(k+1)}$ be the synthetic dataset generated by $f^{(k)}$. Without loss of generality, we assume the initial dataset $\mathcal{D}^{(0)}$ used to train the first model $f^{(0)}$ consists of real data. We consider two variants of the self-consuming training loop \cite{mad}:
\begin{enumerate}
    \item \textbf{Pure synthetic loop:} Only the synthetic data generated by the most recent model is used to train the next model, i.e., ${\mathcal{D}}^{(k)}=\widehat{\mathcal{D}}^{(k)}, \forall k$.
   \item \textbf{Accumulation loop:} All previously used training data, including the initial real dataset $\mathcal{D}^{(0)} $ and synthetic samples from earlier generations, are accumulated to train the next model, i.e., $ \mathcal{D}^{(k)} \coloneqq \mathcal{D}^{(k-1)} \cup \widehat{\mathcal{D}}^{(k)} = \mathcal{D}^{(0)} \cup \widehat{\mathcal{D}}^{(1)} \cup \cdots \cup \widehat{\mathcal{D}}^{(k)}, \forall k$.
\end{enumerate}
In practice and the experiments presented in Section~\ref{sec:exp}, we may apply additional processing steps to  $\mathcal{D}^{(k)}$, such as re-sampling or filtering, before training $f^{(k)}$.  

\smallskip
\noindent \textbf{Background: Diffusion Model.} We use Denoising Diffusion Probabilistic Models (DDPM)~\citep{ddpm} as an example to illustrate the training of $f^{(k)}$ in each round of the retraining loop, though our analysis and method are broadly applicable beyond this specific architecture.

Training a DDPM consists of two stages: a \textit{forward process} that gradually adds noise to the data, and a \textit{reverse process} that learns to remove the noise. In the forward process, Gaussian noise is iteratively added to a clean image $\bm{x}_0\sim q(\bm{x}_0)$ over $T$ steps, producing a sequence of increasingly noisy images $\{\bm{x}_1, \dots, \bm{x}_T\}$. The conditional distribution at each timestep is defined as:
\[
q(\bm{x}_t|\bm{x}_{t-1}) = \mathcal{N}(\bm{x}_t; \sqrt{1 - \beta_t} \bm{x}_{t-1}, \beta_t \bm{I}),
\]
where $\beta_t$ is a variance schedule. As $T \to \infty$, the image $\bm{x}_T$ approaches pure Gaussian noise. The reverse process learns to denoise $\bm{\epsilon}_0 \sim \mathcal{N}(\bm{0}, \bm{I})$ back to $\bm{x}_0$ using a model $\widehat{\bm{\epsilon}}_{\theta}$ trained to predict the noise at each step. Specifically, the model is optimized by minimizing the simplified Evidence Lower Bound (ELBO) objective: $\nabla_{\theta}\left\| \widehat{\bm{\epsilon}}_\theta(\bm{x}_t, t) - \bm{\epsilon}_0 \right\|^2
$. At generation time, denoising proceeds from
$\bm{x}_T \sim \mathcal{N}(\bm{0}, \bm{I})$ to $\bm{x}_0$ using the learned $\widehat{\bm{\epsilon}}_{\theta}$ according to the following: 
\[
\bm{x}_{t-1} = \frac{1}{\sqrt{\alpha_t}} \left(\bm{x}_t - \frac{1 - \alpha_t}{\sqrt{1 - \bar{\alpha}_t}} \widehat{\bm{\epsilon}}_\theta(\bm{x}_t, t)\right) + \sigma_q(t) \bm{z},
\]
where $\alpha_t = 1 - \beta_t$, $\bar{\alpha}_t = \prod_{i=1}^t \alpha_i$, $\bm{z} \sim \mathcal{N}(\bm{0}, \bm{I})$, and
$\sigma_q(t) = \sqrt{\frac{(1 - \alpha_t)(1 - \bar{\alpha}_{t-1})}{1 - \bar{\alpha}_t}}$.
\citet{estimate_grad} propose a related framework based on score matching, where a neural network is trained to approximate the score function $\bm{s}_\theta(\bm{x}) \approx \nabla_{\bm{x}} \log q(\bm{x})$. Using Langevin dynamics~\citep{langevin}, one can then generate samples that follow the target distribution via the learned score function. This framework connects to DDPM through the following approximation: $\bm{s}_\theta(\bm{x}_t,t) \approx - \frac{\widehat{\bm{\epsilon}}_\theta(\bm{x}_t, t)}{\sqrt{1 - \bar{\alpha}_t}}$. 

In practice, diffusion models typically employ U-Net architectures~\citep{unet}, which encode features through a series of downsampling layers and reconstructs the image via upsampling. We extract latent representations from U-Net’s bottleneck layer, which has proven effective for various downstream tasks such as controllable image editing~\citep{semantic_latent,dynamic}.

\smallskip
\noindent\textbf{Low-dimensional structure of latent representations under self-consuming loop.} 
Existing research on self-consuming generative models has primarily focused on the input space, while the impact on latent representations remains largely unexplored. Understanding how latent representations and their underlying low-dimensional structure evolve during self-consuming training may offer valuable insights for developing strategies to mitigate model collapse.

Specifically, for the $k$-th round of iterative retraining loop, let the denoising model of the diffusion model $f^{(k)}$ be denoted as $\widehat{\bm{\epsilon}}^{(k)}_{\theta}(\cdot) = d^{(k)} \circ e^{(k)}(\cdot)$, which is formed by an encoder $e^{(k)}$ and a decoder $d^{(k)}$. Since $e^{(0)}$ is the encoder of the initial model $f^{(0)}$ trained solely on \textit{real} data ${\mathcal{D}}^{(0)}$, we consider the output of $e^{(0)}$ as the \textbf{latent representation} of a data sample in this work. For a sample $\bm{x}^{(k)}$ generated by $f^{(k)}$ at $k$-th round of self-consuming training loop, we denote the latent representation at denoising timestep $t$  as $\bm{h}^{(k)}_t = e^{(0)}(\bm{x}^{(k)},t)$, following prior works on representation extraction via diffusion models~\cite{dynamic}.

\smallskip
\noindent\textbf{Objectives.} 
In this work, we focus on self-consuming diffusion models and aim to leverage the low-dimensional structure of latent representations to better understand the causes of model collapse and develop effective mitigation strategies. Our goals are to: 1) examine how the low-dimensional structure of latent representations $\bm{h}^{(k)}_t$ evolves  throughout the self-consuming training loop; and 2) based on these insights, develop a novel mechanism to mitigate model collapse by filtering out less realistic samples, which compared to prior methods, without incurring additional training costs or relying on human annotations.

\section{Evolution of low-dimensional structure of latent representations}
\label{sec:observation}

\textbf{Empirical observations.}
\label{subsec:observation}
For a batch of $N$ images $\mathcal{B}^{(k)} = \{(\bm{x}^{(k)}_1, y^{(k)}_1), \dots, (\bm{x}^{(k)}_N, y^{(k)}_N)\}$ sampled from dataset ${\mathcal{D}}^{(k)}$ at $k$-th round of self-consuming loop,\footnote{The class label can either be the ground-truth annotation or a group label assigned by an unsupervised clustering algorithm. Unsupervised clustering enables us to explore low-dimensional structures beyond predefined classes, as diffusion models can capture factorized features that may not align with standard semantic categories.} the latent representations extracted using $e^{(0)}$ in matrix form are given by:
\begin{align}\label{eq:represntaton}
\textstyle \bm{M}^{(k)}_t = 
&\begin{bmatrix}
\bm{h}^{(k)}_{1,t} & \bm{h}^{(k)}_{2,t} & \cdots & \bm{h}^{(k)}_{N,t} 
\end{bmatrix},~~ \notag\\
 \text{ where } ~~\bm{h}^{(k)}_{i,t} = &~e^{(0)}(\bm{x}^{(k)}_i,t)
\end{align}
Similarly, we define the \textit{class-specific} latent representations $\bm{M}^{(k)}_{c,t}$ for all samples $\mathcal{B}^{(k)}_c=\{(\bm{x}, y)\in \mathcal{B}^{(k)} \mid y=c \} $ belonging to class $c\in\mathcal{C}$. Given (class-specific) latent representations $\bm{M}_t^{(k)}$ and $\{\bm{M}_{c,t}^{(k)}\}_{c\in\mathcal{C}}$, we measure the low-dimensional structure and class separability using the Orthogonal Low-rank Embedding (OLE) criterion~\citep{ole}, which captures the degree of orthogonality between class-specific subspaces and is defined as: 
\[
\textstyle \text{OLE}^{(k)}_t \coloneqq \sum_{c\in\mathcal{C}} \| \bm{M}_{c,t}^{(k)} \|_* - \| \bm{M}_t^{(k)} \|_*,
\]
where $\| \cdot \|_*$ is the nuclear norm (sum of the singular values). 
A lower OLE value indicates that the latent representations exhibit stronger intra-class low-rank structure and greater inter-class orthogonality, thereby reflecting a more structured and separable latent space. The OLE criterion has been widely used in prior work to enhance feature discriminability in classification tasks~\citep{ole,nuclear_cos}. In our setting, increasing OLE values suggest a trend toward a degenerate latent space, where class-specific subspaces lose structure and become less distinguishable.

\begin{figure}[t]
    \centering
    \includegraphics[width=.9\linewidth]{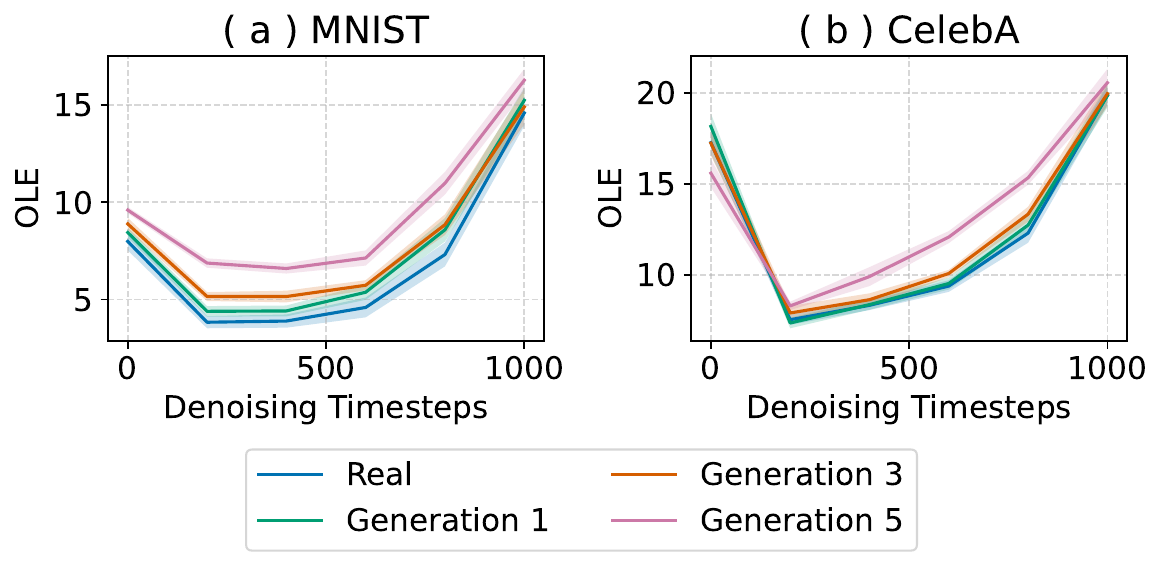}
    \caption{OLE values of latent representations extracted by a fixed diffusion model across generations and denoising timesteps. Each curve corresponds to a generation within a pure synthetic self-consuming loop. When conditioned on timestep, OLE values increase with generation, indicating progressive structural degradation of the latent space. When conditioned on generation, OLE exhibits a U-shaped trend over denoising timesteps.
    }
    \label{fig:datasets_ole}
\end{figure}

We first consider a pure synthetic self-consuming loop and train diffusion models for 6 generations, with models fine-tuned on 1,000 images for 3 epochs in each generation. For each image, we assign a label using sparse subspace clustering by orthogonal matching pursuit (SSC-OMP) algorithm~\citep{ssc-omp}. We compute OLE values for 10 batches of real and synthetic data. The results for MNIST \cite{mnist} and CelebA \cite{celeba} are shown in Figure~\ref{fig:datasets_ole}. Conditioned on generation, we observe that OLE initially decreases and then increases with the denoising timestep; this trend is consistent with recent findings that representation quality exhibits a unimodal trajectory~\citep{dynamic}. Lower OLE values are associated with greater feature separability, higher representation quality, and improved performance on downstream tasks. More importantly, when conditioned on the same timestep, OLE values increase as the self-consuming generation progresses. This suggests that the latent representations become less structured and increasingly entangled over time. 

The above observations are expected. Intuitively, latent features extracted by the diffusion model act as intermediaries that guide the generation of high-quality images. However, in self-consuming loops, synthetic data gradually deviates from the real data distribution. As a result, the learned low-dimensional structure diverges from the true data manifold, becoming less informative over time and ultimately leading to a decline in the quality of generated samples.




\noindent\textbf{Theoretical analysis.} The formulation of OLE is motivated by geometric intuition, aiming to quantify how well class-specific feature subspaces are aligned or separated. Prior work has shown that OLE is always non-negative and equals zero when the subspaces are perfectly orthogonal~\citep{ole}. 
However, the quantitative relationship between OLE and the degree of subspace orthogonality remains unexplored. 
In what follows, we address this gap by developing a theoretical framework that characterizes the connection between latent space degeneration and the dynamics of OLE. For clarity of presentation, we omit the diffusion timestep $t$ and generation index $k$ in the notations.

Given latent representation $\bm{h}_i$ of data sample $(\bm{x}_i,y_i)$ generated by encoder $e^{(0)}$ (Eqn.~\eqref{eq:represntaton}), we assume data are from two classes $\mathcal{C}=\{0,1\}$ and  latent representations $\bm{h}_i\in\mathbb{R}^{d\times 1}$ for each class $c\in \{0,1\}$ are drawn from noisy low-rank Gaussian distributions. Mathematically, we model conditional distribution of $\bm{h}_i\mid y_i=c$ as zero-mean Gaussian  $\mathcal{N}(\bm{0}, \bm{\Sigma}_c)$ with covariance  $\bm{\Sigma}_c = \bm{U}_c \bm{U}_c^\top + \sigma^2 \bm{I}_d$. Here, $\bm{U}_c \in \mathbb{R}^{d \times r}$ is an orthonormal basis matrix representing the low-dimensional subspace of class $c$ with rank $r < d$, and $\sigma^2 \bm{I}_d$ represents isotropic noise.

Let matrix $\bm{M}_c \in \mathbb{R}^{d \times n}$ be the \textit{class-specific} latent representations for $n$ samples from class $c$, with each column the representation of one sample. 
Let $\bm{M} = [\bm{M}_0 \;\; \bm{M}_1] \in \mathbb{R}^{d \times 2n}$ denote the concatenated matrix of a batch of samples from two classes. The OLE for this sample matrix becomes:
\[
\text{OLE}(\bm{M}_0, \bm{M}_1) = \|\bm{M}_0\|_* + \|\bm{M}_1\|_* - \|\bm{M}\|_*,
\]
where $\|\cdot\|_*$ denotes the nuclear norm. For any column unit vectors $\bm{u}_i$ of $\bm{U_0}$ and $\bm{v}_j$ in $\bm{U}_1$, denote their angle as $\theta_{ij}$, i.e., $\bm{u}_i^\top \bm{v}_j = \cos \theta_{ij}$, $i,j \in [r].$ Suppose that the cosine similarity of each pair of unit basis can be bounded by:
\[
\cos \tilde{\theta} \leq \cos \theta_{ij} \leq \ell \cos \tilde{\theta},
\]
where $\ell$ measures the range of pairwise cosine similarities, $\tilde{\theta} \in [0, \frac{\pi}{2}]$ is the largest pairwise angle.
Theorem \ref{theorem:ole_lob} below provides a lower bound on the expected value of OLE, characterized by the degree of orthogonality between subspaces.

\begin{theorem}[Lower bound of OLE]\label{theorem:ole_lob}
For all $\theta\in [0,\frac{\pi}{2}]$, when $r > 2n$, the expected OLE satisfies the following:
    \[
    \mathbb{E}[\text{OLE}(\bm{M}_0, \bm{M}_1)] \geq C_1 - C_2 \cdot \phi(\tilde{\theta}),
    \]
    where
    \begin{align*}
    \phi(\tilde{\theta}) & \coloneqq \sqrt{2n} \cdot \sqrt{{\ell \cos\tilde{\theta}}} + \sqrt{2n(2n - 1)} \cdot \sqrt{1 - \cos\tilde{\theta}} \; , \\
      C_1 &\coloneqq 2\gamma(rn)
      - 2\sqrt{n} \gamma(dn)
      - \sqrt{2n} \cdot \gamma(2dn), \\
    C_2 &\coloneqq 2 \gamma(rn),
    \end{align*}
    and $\gamma(n) \coloneqq \sqrt{2} \cdot \frac{ \Gamma\left( \frac{n+1}{2} \right) }{ \Gamma\left( \frac{n}{2} \right) }$, with $\Gamma(\cdot)$ being Gamma function. 
\end{theorem}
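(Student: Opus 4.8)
The plan is to reduce the nuclear-norm expression to a ``signal versus noise'' decomposition and to isolate the subspace-angle dependence in a single term. First I would make the Gaussian structure explicit: since $\bm{h}_i\mid y_i=c \sim \mathcal{N}(\bm{0},\bm{U}_c\bm{U}_c^\top+\sigma^2\bm{I}_d)$, each column can be written as $\bm{h}_i=\bm{U}_c\bm{a}_i+\sigma\bm{g}_i$ with $\bm{a}_i\sim\mathcal{N}(\bm{0},\bm{I}_r)$ and $\bm{g}_i\sim\mathcal{N}(\bm{0},\bm{I}_d)$ independent. Stacking columns gives $\bm{M}_c=\bm{U}_c\bm{A}_c+\sigma\bm{G}_c$, where $\bm{A}_c\in\mathbb{R}^{r\times n}$ and $\bm{G}_c\in\mathbb{R}^{d\times n}$ have i.i.d.\ standard Gaussian entries. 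The Gamma-function quantity is exactly the mean Frobenius norm of such matrices: for a Gaussian matrix with $m$ entries, $\mathbb{E}\|\cdot\|_F=\mathbb{E}\|\bm{z}\|_2=\gamma(m)$ with $\bm{z}\sim\mathcal{N}(\bm{0},\bm{I}_m)$ (the chi-distribution mean). This is what will produce every $\gamma(\cdot)$ appearing in $C_1$ and $C_2$.

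Next I would bound the three nuclear norms separately. For the class terms I would use the reverse triangle inequality together with the unitary invariance $\|\bm{U}_c\bm{A}_c\|_*=\|\bm{A}_c\|_*$ and the elementary bound $\|\cdot\|_*\ge\|\cdot\|_F$, giving $\mathbb{E}\|\bm{M}_c\|_*\ge\gamma(rn)-\sigma\,\mathbb{E}\|\bm{G}_c\|_*$; the noise piece is controlled from above by $\|\bm{G}_c\|_*\le\sqrt{\operatorname{rank}(\bm{G}_c)}\,\|\bm{G}_c\|_F\le\sqrt{n}\,\|\bm{G}_c\|_F$, so $\mathbb{E}\|\bm{G}_c\|_*\le\sqrt{n}\,\gamma(dn)$. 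For the joint matrix I would split $\bm{M}=\bm{S}+\sigma\bm{G}$ with $\bm{S}=[\bm{U}_0\bm{A}_0\;\;\bm{U}_1\bm{A}_1]$ and $\bm{G}=[\bm{G}_0\;\;\bm{G}_1]$, apply the triangle inequality $\|\bm{M}\|_*\le\|\bm{S}\|_*+\sigma\|\bm{G}\|_*$, and bound $\mathbb{E}\|\bm{G}\|_*\le\sqrt{2n}\,\gamma(2dn)$ in the same way. Summing these (taking $\sigma\le 1$) collapses the bookkeeping to $\mathbb{E}[\text{OLE}]\ge C_1-\mathbb{E}\|\bm{S}\|_*$, so the entire theorem reduces to showing $\mathbb{E}\|\bm{S}\|_*\le C_2\,\phi(\tilde{\theta})$.

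The heart of the argument is this last bound, where the angles must enter. I would pass to the Gram matrix $\bm{K}=\bm{S}^\top\bm{S}\in\mathbb{R}^{2n\times 2n}$, whose diagonal blocks are $\bm{A}_c^\top\bm{A}_c$ and whose cross blocks are $\bm{A}_0^\top\bm{P}\bm{A}_1$ with $\bm{P}=\bm{U}_0^\top\bm{U}_1$ and $P_{ij}=\cos\theta_{ij}$. Since $\|\bm{S}\|_*=\operatorname{tr}(\bm{K}^{1/2})$, I would bound the trace of the square root entrywise, separating the $2n$ diagonal from the $2n(2n-1)$ off-diagonal entries and applying Cauchy--Schwarz to each group; this is exactly what manufactures the prefactors $\sqrt{2n}$ and $\sqrt{2n(2n-1)}$ in $\phi$. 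Feeding in the hypothesis $\cos\tilde{\theta}\le\cos\theta_{ij}\le\ell\cos\tilde{\theta}$ converts the cross-correlations into the $\sqrt{\ell\cos\tilde{\theta}}$ and $\sqrt{1-\cos\tilde{\theta}}$ factors, and taking expectations over the Gaussian coefficients contributes the magnitude factor that assembles into $C_2=2\gamma(rn)$. The condition $r>2n$ is what I expect to be needed to guarantee the relevant rank/full-column-rank statements (e.g.\ that $\bm{S}$ attains rank $2n$) used in the entrywise step.

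I expect the main obstacle to be precisely this pathwise inequality relating $\operatorname{tr}(\bm{K}^{1/2})$ to the entries of $\bm{K}$ while preserving the angle information. The subtlety is that the cross-block contribution vanishes in expectation ($\mathbb{E}[\bm{K}]=r\,\bm{I}_{2n}$), so any bound applied \emph{after} averaging would erase all dependence on $\tilde{\theta}$; the entrywise/Cauchy--Schwarz estimate must therefore be carried out for fixed realizations and only then averaged, and it must be tight enough to reproduce the exact diagonal/off-diagonal split of $\phi$. A secondary point to verify carefully is the treatment of the noise scale $\sigma$ (the stated $C_1$ corresponds to $\sigma\le 1$) and confirming that the reverse-triangle step does not lose more than the $\gamma(rn)$ already budgeted.
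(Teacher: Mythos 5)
Your skeleton coincides with the paper's proof up to the last step: the decomposition $\bm{M}_c=\bm{U}_c\bm{A}_c+\bm{E}_c$, the class-term lower bound $\|\bm{M}_c\|_*\ge\|\bm{A}_c\|_F-\sqrt{n}\,\|\bm{E}_c\|_F$, the noise bounds $\sqrt{n}\,\gamma(dn)$ and $\sqrt{2n}\,\gamma(2dn)$, and the reduction of the theorem to showing $\mathbb{E}\|\bm{S}\|_*\le C_2\,\phi(\tilde\theta)$ for the signal block $\bm{S}=[\bm{U}_0\bm{A}_0\;\;\bm{U}_1\bm{A}_1]$ are all exactly the paper's steps (the paper, like you, effectively sets $\sigma=1$). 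The gap is precisely the step you yourself flag as the heart of the argument, and the obstacle you name is fatal to the route you propose rather than a technicality to be managed. Pathwise, the cross-block entries of $\bm{K}=\bm{S}^\top\bm{S}$ are $\bm{a}_i^\top\bm{P}\bm{b}_j$ with $\bm{P}=\bm{U}_0^\top\bm{U}_1$: zero-mean, sign-indefinite random variables. The mechanism that produces bounds of the shape $\phi$ (this is how Lemma~\ref{lemma:nuclear_cos} is proved) is to lower-bound the top singular value by the all-ones Rayleigh quotient of the Gram matrix and then apply Cauchy--Schwarz to the remaining singular values; it requires the off-diagonal Gram entries to be deterministically sandwiched in $[\cos\tilde\theta,\ \ell\cos\tilde\theta]$. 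That holds for the cosines themselves but fails for your random inner products: on realizations where the cross terms are negative, the Rayleigh-quotient lower bound collapses exactly in the aligned regime (small $\theta$) that the theorem is about. So carrying out the entrywise estimate ``for fixed realizations and only then averaging'' does not rescue the plan; the angle dependence simply cannot be read off $\bm{K}$ entrywise, neither before nor after taking expectations.

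The missing idea---and the paper's actual device---is to separate randomness from geometry \emph{before} any Gram-matrix reasoning: factor $\bm{S}=\widetilde{\bm{U}}\widetilde{\bm{Z}}$ with $\widetilde{\bm{U}}=[\bm{U}_0\;\;\bm{U}_1]$ and $\widetilde{\bm{Z}}$ the block-diagonal coefficient matrix built from $\bm{A}_0,\bm{A}_1$, and apply H\"older's inequality for Schatten norms (Lemma~\ref{lemma:nuclear_opnorm}) together with Lemma~\ref{lemma:op_fnorm}: $\|\widetilde{\bm{U}}\widetilde{\bm{Z}}\|_*\le\|\widetilde{\bm{U}}\|_*\,\|\widetilde{\bm{Z}}\|_{op}\le\|\widetilde{\bm{U}}\|_*\,\|\widetilde{\bm{Z}}\|_F$. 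All randomness is now in the scalar factor $\|\widetilde{\bm{Z}}\|_F$, with $\mathbb{E}\|\widetilde{\bm{Z}}\|_F\le\mathbb{E}\bigl[\|\bm{A}_0\|_F+\|\bm{A}_1\|_F\bigr]=2\gamma(rn)=C_2$, and all angle dependence sits in the \emph{deterministic} matrix $\widetilde{\bm{U}}$ of unit columns, to which Lemma~\ref{lemma:nuclear_cos} applies directly, since its Gram entries are exactly the cosines $\cos\theta_{ij}$ obeying the assumed two-sided bounds; this is what manufactures $\phi(\tilde\theta)$. (Even the paper's own execution is loose here: $\widetilde{\bm{U}}$ has $2r$ columns, so a literal application of Lemma~\ref{lemma:nuclear_cos} yields $\sqrt{2r}$-type prefactors rather than $\sqrt{2n}$, and its average cosine also includes the unit diagonal and the zero within-class pairs. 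But the H\"older factorization is the idea your proposal lacks, and without it your reduction cannot be completed along the lines you describe.)
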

The proof of Theorem~\ref{theorem:ole_lob} is provided in Appendix~\ref{proof:ole_lob}.
This result offers a theoretical explanation for the empirical trend observed in Figure~\ref{fig:datasets_ole}, where OLE values increase across successive generations. More specifically, assuming that as the self-consuming process progresses, the quality of the generated data decreases and the learned latent subspaces become less orthogonal (i.e., $\theta$ between subspaces $\text{span}(\bm{U}_0)$ and $\text{span}(\bm{U}_1)$ decreases), Theorem~\ref{theorem:ole_lob} shows that as $\theta$ decreases, the lower bound on the expected OLE increases.

\begin{remark}
Since $\phi(\theta)$ in Theorem~\ref{theorem:ole_lob}  is decreasing as $\theta$ decreases over $[0, \arccos \frac{1}{2n}]$, the lower bound of OLE increases as the subspaces $\bm{U}_0$, $\bm{U}_1$ become less orthogonal. When $n \to \infty$, we have $\arccos \frac{1}{2n} \to \frac{\pi}{2}$, indicating that the conclusion in Theorem~\ref{theorem:ole_lob} holds for most $\theta\in[0,\frac{\pi}{2}]$.    
\end{remark}



\noindent\textbf{Mitigate model collapse through filtering.}
Previous studies have suggested that removing low-quality synthetic samples from the training data can be more effective in stabilizing retraining than adding high-quality data~\citep{remove}. However, identifying synthetic images directly within a mixture of real and synthetic data is challenging~\citep{synthetic_property,verification}. The observations in Figure~\ref{fig:datasets_ole} suggest a potential direction:
\begin{center}
 \textit{Can synthetic images be identified and filtered based on the quality of their latent representations?}   
\end{center}

\begin{figure}[t]
  \begin{minipage}[t]{.23\textwidth}
    \includegraphics[width=.85\textwidth]{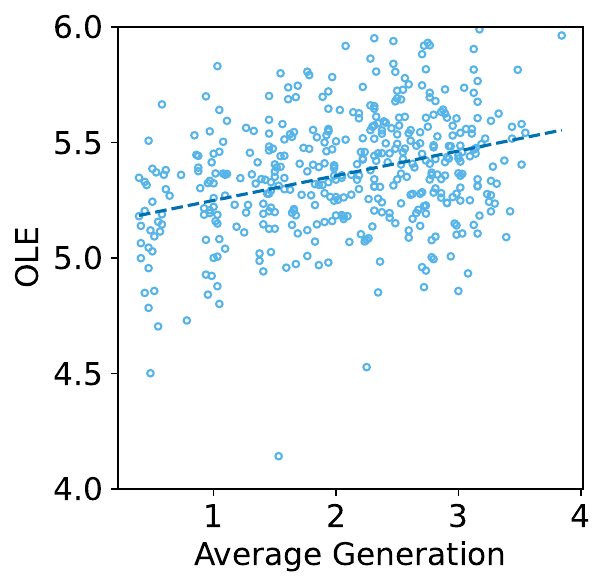}
    \caption{Correlation between the average generation number of each batch and its corresponding OLE score, computed on the accumulated CelebA dataset. Batches with lower average generation numbers (i.e., containing more realistic images) tend to have lower OLE values.
    However, the correlation is not strong enough to enable reliable filtering based solely on batch-level OLE.}
    \label{fig:celeba_ole_reg}
  \end{minipage}
  \hfill
  \begin{minipage}[t]{.23\textwidth}
    \includegraphics[width=.85\textwidth]{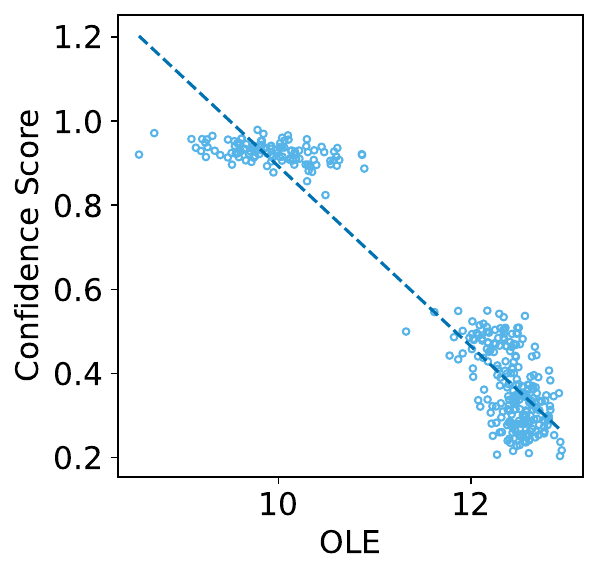}
    \caption{Correlation between the OLE and average confidence scores of each batch, computed on the real and synthetic MNIST datasets produced at different generations. Batches with lower OLE (i.e., greater feature separability) tend to have higher confidence scores, indicating that we can use confidence score as an individual-level proxy of batch-level OLE score.}
    \label{fig:mnist_ole_acc}
  \end{minipage}
\end{figure}

A straightforward idea is to compute the OLE score for each batch $\mathcal{B}^{(k)}$ and discard those whose scores exceed a certain threshold. While this approach is intuitive, it becomes less effective in realistic scenarios where real and synthetic images are mixed within each batch. Unlike in our controlled experiments where each batch contained either only real or only synthetic data (as shown in Figure~\ref{fig:datasets_ole}), mixed batches dilute the signal that OLE captures. 

To assess the feasibility of OLE-based filtering in such mixed scenarios, we combine the accumulated datasets $\mathcal{D}^{(0)}, \ldots, \mathcal{D}^{(5)}$ from the Accumulation Loop and compute the OLE score for each batch. Each image is tagged with the generation index from which it was sampled, allowing us to calculate the average generation number per batch. A lower average generation number indicates that the batch contains more samples from earlier stages of self-consuming training, which are less affected by model collapse and therefore more likely to be of higher quality. As shown in Figure~\ref{fig:celeba_ole_reg}, there is a positive correlation between batch OLE scores and the average generation number. Since earlier generations contain more realistic samples, this correlation supports the hypothesis that lower OLE scores reflect higher-quality latent representations. However, the correlation is relatively weak, and filtering data batches based on OLE scores is not a reliable strategy—particularly when each batch contains a mix of real and synthetic images. Indeed, as shown in Figure~\ref{fig:mnist_clf_filter}, OLE-based filtering does not yield significant improvement over random sampling in stabilizing self-consuming training. We thus introduce an alternative proxy of OLE scores with a better filtering algorithm, as detailed in Section~\ref{sec:method}.

\section{Proposed method: latent space filtering}
\label{sec:method}
Section~\ref{sec:observation} shows that OLE scores computed over data batches cannot serve as a reliable criterion for filtering synthetic data, particularly when each batch contains a mix of real and synthetic samples. We therefore introduce an alternative representation-based filtering mechanism, which is related to OLE but more effective, as detailed below.

\smallskip
\noindent \textbf{Motivation.}
As shown in Figure~\ref{fig:datasets_ole}, OLE exhibits a U-shaped trend over diffusion timestep $t$ when conditioned on generation number $k$, and increases with generation $k$ when conditioned on timestep $t$. Interestingly, a similar U-shaped pattern was observed in~\citep{dynamic}, which demonstrates that the \textit{representation quality} of diffusion models during the denoising process follows a unimodal trajectory.  We thus hypothesize that representation quality---measured by the accuracy of a probing classifier (a simple, often linear classifier applied to latent representations to assess their informativeness)---also reflects the underlying low-dimensional structure, similar to what is captured by OLE scores. 

Following this idea, we propose a \textit{representation-based} filtering approach. Assuming that a probing classifier trained on the latent representations of \textit{real} images captures the underlying low-dimensional structure, we can then use the classifier's confidence scores to measure the degree of misalignment between a sample's representation and those of real data. Specifically, we first extract latent representations of real images using a well-trained diffusion model, then train a softmax regression model based on these representations. Let the output logit for class $c$ be defined as ${o}_c(\bm{x}) \coloneqq \bm{w_c}^\top\bm{x} + \bm{b}_c$. Then, for an image $\bm{x}$ with label $y$, we compute its \textbf{confidence score}  regarding the correct class by softmax function:
\begin{equation}\label{eq:confidence}
    \xi(\bm{x}, y) \coloneqq \frac{\exp \{o_y(\bm{x})\}}{\sum_{c\in\mathcal{C}} \exp \{o_c(\bm{x})\}}.
\end{equation}
We hypothesize that images with higher confidence scores are more likely to resemble real data and can therefore serve as a reliable filtering criterion. To verify this hypothesis, we conduct experiments on the MNIST dataset, with results shown in Figure~\ref{fig:mnist_clf_filter}.


\begin{figure*}[htbp]
    \centering
    \begin{subfigure}{0.28\textwidth}
        \centering
        \includegraphics[width=.75\linewidth]{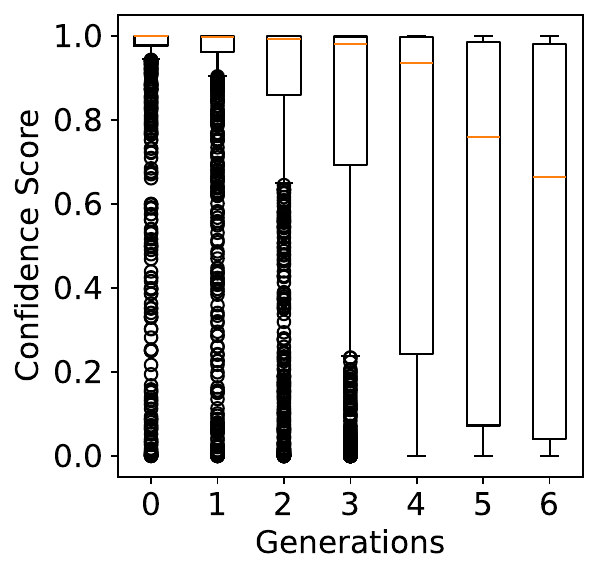}
        \caption{Generations vs. Confidence scores}
        \label{fig:mnist_clf}
    \end{subfigure}
    \hfill
    \begin{subfigure}{0.27\textwidth}
        \centering
        \includegraphics[width=.75\linewidth]{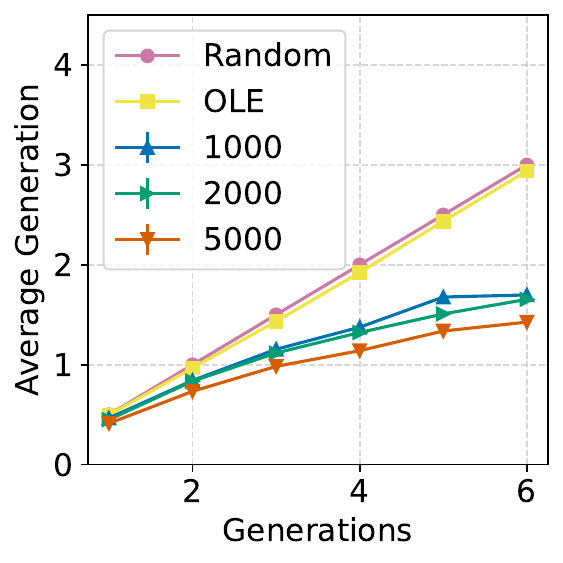}
        \caption{Average generation after filtering}
        \label{fig:mnist_clf_fiter_gen}
    \end{subfigure}
    \hfill
    \begin{subfigure}{0.27\textwidth}
        \centering
        \includegraphics[width=.75\linewidth]{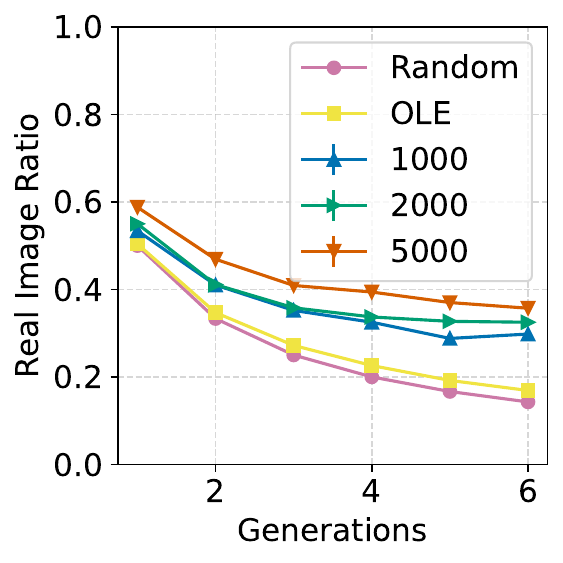}
        \caption{Ratio of real images after filtering}
        \label{fig:mnist_clf_filter_ratio}
    \end{subfigure}
    \caption{The effect of using confidence scores for filtering on MNIST dataset. (a) shows the generation number vs. distribution of confidence scores of images sampled at this generation. Higher generation is correlated with lower confidence score, indicating that they are less aligned with real images, hence providing a strong signal for filtering out unrealistic synthetic images. (b) shows the average generation number within the dataset filtered by the confidence score. (c) Our method can also preserve more real images in the training set. Using larger accumulated dataset for filtering can increase the ratio of selected real images. Compared with randomly sampling from the accumulated dataset, our filtering method can preserve a distribution closer to real images.}
    \label{fig:mnist_clf_filter}
\end{figure*}
Figure~\ref{fig:mnist_clf} presents box-and-whisker plots of confidence scores across different generations\footnote{Each box spans from the first quartile (Q1) to the third quartile (Q3) of the data, with whiskers extending to 1.5 times the interquartile range (IQR). Data points beyond the whiskers are shown as fliers, indicating outliers. 
}.
The results reveal a clear correlation: \textbf{images from later generations tend to have lower confidence scores}. This suggests that confidence scores can serve as a more fine-grained filtering criterion compared to batch-level OLE scores. Furthermore,  Figure~\ref{fig:mnist_ole_acc} shows a \textbf{positive correlation between low OLE scores and high average confidence scores across batches}, suggesting that confidence scores can serve as a viable proxy for OLE scores. Unlike OLE, which operates at the batch level, confidence scores are computed per image, allowing us to circumvent the previously mentioned issue of mixed real and synthetic samples within the same batch.

To evaluate the effectiveness of confidence-based filtering, we compare it with random sampling in Figures~\ref{fig:mnist_clf_fiter_gen} and~\ref{fig:mnist_clf_filter_ratio}. For each generation number $k$, we construct an accumulated dataset by concatenating the real dataset with a series of synthetic datasets $\widehat{\mathcal{D}}^{(1)}, \ldots, \widehat{\mathcal{D}}^{(k)}$, where the size of these datasets increases from 1,000 to 5,000 images, as indicated in the figures. Using a pre-trained probing classifier, we score all images and select the top 1,000 with the highest confidence. Figure~\ref{fig:mnist_clf_fiter_gen} reports the average generation number of the filtered images: lower values indicate samples that are closer to the real distribution. Since real images play a more critical role in mitigating model collapse in self-consuming loops~\citep{mad,recrusive_stability,self_consuming_diffusion}, Figure~\ref{fig:mnist_clf_filter_ratio} further reports the proportion of real images in the filtered set. The results demonstrate that \textbf{filtering using the probing classifier yields both a lower average generation number and a higher fraction of real images} compared to random sampling. Furthermore, as the size of the accumulated dataset increases, the filtering becomes more effective because the larger pool of real images results in the filtered fixed-size dataset containing more real images.

\begin{algorithm}[tb]
\caption{Latent Space Filtering}
\label{alg}
\textbf{Input:} Pre-trained diffusion model $f^{(0)}$, training budget $N$, self-consuming loop $K$.
\begin{algorithmic}[1]
\STATE Train a probing classifier with $f^{(0)}$ and real images $\mathcal{{D}}^{(0)}$
\FOR{$k = 1$ to $K$}
    \STATE Sample a new dataset $\mathcal{\widehat{D}}^{(k)}$ from model ${f}^{(k-1)}$.
    \STATE Construct a dataset $\mathcal{D}^{(k)}$ according to the type of the self-consuming loop.
    \STATE Compute the confidence score $\xi(\bm{x_i}, y_i)$ for each image $(\bm{x_i}, y_i) \in \mathcal{D}^{(k)}$ by Eqn.~\eqref{eq:confidence}.
    \STATE Select top-$N$ images with highest scores to construct a new set $\underline{\mathcal{D}}^{(k)}$ with training budget $N$.
    \STATE Train a new diffusion model ${f}^{(k)}$ using dataset $\underline{\mathcal{D}}^{(k)}$.
\ENDFOR
\STATE \textbf{Output:} $f^{(K)}$
\end{algorithmic}
\end{algorithm}

In Algorithm~\ref{alg}, we summarize the complete procedure for applying the proposed filtering method to self-consuming accumulation training loop. It is worth noting that access to real images is only required at the start of the self-consuming loops. Afterward, there is no need to distinguish real images from the synthetic dataset in the accumulated dataset. Additionally, even when images are unlabeled, labeling or captioning each image can be easier than detecting synthetic images or acquiring new data~\citep{synthetic_caption, human_perception}. By retaining as many real and realistic synthetic images as possible in the filtered dataset, we aim to prevent model collapse while adhering to a fixed training budget.

\smallskip
\noindent \textbf{Theoretical analysis.}  Following the same setting as in Theorem~\ref{theorem:ole_lob}, we next theoretically characterize the relation between confidence scores and lower-dimensional structure of latent representations. For samples from two classes $\mathcal{C}=\{0,1\}$, let the log-likelihood ratio for a sample representation $\bm{h}$ be defined as
$g(\bm{h}) = \log \frac{\Pr[\bm{h} \mid y = 1]}{\Pr[\bm{h} \mid y = 0]}$. Then,
the posterior probability $\Pr[y = 1 \mid \bm{h}]$ assigned by the Bayes optimal classifier is given by $\varsigma(g(\bm{h}))$, where $\varsigma(\cdot)$ denotes the Sigmoid function. We define the expected confidence score for class 1, under the distribution of class-1 samples, as:
\[
\xi(\theta) := \mathbb{E}_{\bm{h}|y=1 \sim \mathcal{N}(\bm{0}, \bm{\Sigma}_1)}[\varsigma(g(\bm{h}))],
\]
where $\theta \in [0, \frac{\pi}{2}]$ is the angle between subspaces spanned by $\bm{U}_0$ and $\bm{U}_1$. Similar to Theorem~\ref{theorem:ole_lob}, the following theorem characterizes the relationship between the confidence score and the degree of orthogonality between subspaces. 

\begin{theorem}[Upper bound of confidence score]\label{theorem:score}
The expected confidence score satisfies the following:
\[
\xi(\theta) \leq \frac{1}{2\sigma^2 (\sigma^2+1)} \varsigma(r \sin^2 \theta),
\]
\end{theorem}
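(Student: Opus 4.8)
The plan is to make the Bayes log-likelihood ratio $g(\bm h)$ explicit, reduce $\xi(\theta)$ to the expectation of a sigmoid of a quadratic form, compute the single moment that surfaces $r\sin^2\theta$, and then control that expectation by a pointwise affine majorant whose linear part is annihilated upon taking expectations.

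First I would exploit the shared spiked-covariance structure. Since $\bm\Sigma_c=\bm U_c\bm U_c^\top+\sigma^2\bm I_d$ with $\bm U_c^\top\bm U_c=\bm I_r$, each $\bm\Sigma_c$ has eigenvalue $1+\sigma^2$ on $\mathrm{span}(\bm U_c)$ and $\sigma^2$ on its complement, so $\det\bm\Sigma_0=\det\bm\Sigma_1=(1+\sigma^2)^r(\sigma^2)^{d-r}$ and the log-determinant terms in $g$ cancel. The Woodbury identity gives $\bm\Sigma_c^{-1}=\tfrac1{\sigma^2}\bm I_d-\tfrac1{\sigma^2(\sigma^2+1)}\bm U_c\bm U_c^\top$, so that
\[
g(\bm h)=\tfrac12\bm h^\top(\bm\Sigma_0^{-1}-\bm\Sigma_1^{-1})\bm h=\lambda\bigl(\|\bm U_1^\top\bm h\|^2-\|\bm U_0^\top\bm h\|^2\bigr),\quad \lambda:=\frac{1}{2\sigma^2(\sigma^2+1)}.
\]
Writing $A_c:=\|\bm U_c^\top\bm h\|^2$ and $W:=A_1-A_0$, the exact prefactor of the theorem is already isolated and $\xi(\theta)=\mathbb E[\varsigma(\lambda W)]$.

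Next I would compute $\mathbb E[W]$ under $\bm h\mid y=1\sim\mathcal N(\bm 0,\bm\Sigma_1)$. The projection $\bm U_1^\top\bm h\sim\mathcal N(\bm 0,(1+\sigma^2)\bm I_r)$ gives $\mathbb E[A_1]=(1+\sigma^2)r$, while $\mathbb E[A_0]=\mathrm{tr}(\bm U_0\bm U_0^\top\bm\Sigma_1)=\|\bm U_0^\top\bm U_1\|_F^2+\sigma^2 r$. Under the single-angle model (all principal angles equal $\theta$, i.e.\ $\|\bm U_0^\top\bm U_1\|_F^2=r\cos^2\theta$), this yields $\mathbb E[W]=r(1-\cos^2\theta)=r\sin^2\theta=:z\ge0$, exactly the argument of the sigmoid in the claimed bound.

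The hard part is bounding $\xi(\theta)=\mathbb E[\varsigma(\lambda W)]$ by $\lambda\varsigma(z)$. My plan is to establish the pointwise affine majorant
\[
\varsigma(\lambda w)\le\lambda\bigl[\varsigma(z)+\varsigma'(z)(w-z)\bigr],
\]
which, upon taking expectation and using $\mathbb E[W]=z$, kills the linear term and produces exactly $\lambda\varsigma(z)=\lambda\varsigma(r\sin^2\theta)$. For $w\ge0$ this is provable: analysing $F(w):=\lambda[\varsigma(z)+\varsigma'(z)(w-z)]-\varsigma(\lambda w)$ shows its minimiser is $w^\star=z/\lambda$ with $F(w^\star)=(\lambda-1)[\varsigma(z)-z\varsigma'(z)]$, which is nonnegative once $\lambda\ge1$ (the low-noise regime), because $z(1-\varsigma(z))\le z e^{-z}\le 1/e<1$ forces $\varsigma(z)\ge z\varsigma'(z)$. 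The genuine obstacle is the region $w<0$: there the affine majorant eventually falls below $\varsigma(\lambda w)$, so the pointwise bound fails and the event $\{W<0\}$ must be shown to contribute negligibly. I would close this by invoking concentration of $W=A_1-A_0$ about its positive mean $z=r\sin^2\theta$ (both $A_0,A_1$ are $O(\sqrt r)$-fluctuating chi-square variables separated by a gap of order $r$), so that $\Pr[W<0]$ is exponentially small in the large-$r$ regime and is absorbed into the slack $\lambda\varsigma(z)-\varsigma(\lambda z)>0$. Making this absorption quantitative while preserving the clean stated constant is where the main care is required; the determinant cancellation, Woodbury step, and chi-square moment computation are routine.
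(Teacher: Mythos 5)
Your reduction of the problem is sound and matches the paper's own derivation up to the last step: the Woodbury inversion, the cancellation of the log-determinants, the identity $g(\bm{h})=\lambda\left(\|\bm{U}_1^\top\bm{h}\|^2-\|\bm{U}_0^\top\bm{h}\|^2\right)$ with $\lambda=\frac{1}{2\sigma^2(\sigma^2+1)}$, and the moment computation $\mathbb{E}[W]=r\sin^2\theta$ are all correct (the paper obtains the same mean by writing $\bm{h}=\bm{U}_1\bm{z}+\sigma\bm{\epsilon}$ and noting that the cross term $\eta_1$ and noise term $\eta_2$ have zero mean). Where you diverge is the final bound, and that is exactly where your proof is not finished. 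Your tangent-line majorant $\varsigma(\lambda w)\le\lambda[\varsigma(z)+\varsigma'(z)(w-z)]$ holds, as you show, only on $w\ge 0$ and only when $\lambda\ge 1$; on $\{W<0\}$ it fails outright, and your plan to absorb that event by concentration does not go through uniformly in $\theta$. Contrary to your claim that $A_0$ and $A_1$ are "separated by a gap of order $r$," the gap is $r\sin^2\theta$, and the fluctuations of $W$ largely cancel between the two correlated chi-square terms: the standard deviation of $W$ scales like $\sqrt{r}\,\sin\theta\,(\sin\theta+\sigma+\sigma^2)$, so the mean-to-deviation ratio is of order $\sqrt{r}\sin\theta/(\sigma+\sigma^2)$. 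Hence $\Pr[W<0]$ is exponentially small only when $\sqrt{r}\sin\theta\gg\sigma+\sigma^2$; for $\sin\theta\lesssim\sigma/\sqrt{r}$ --- a nonempty range for every finite $r$, while the theorem is asserted for all $\theta\in[0,\frac{\pi}{2}]$ --- the event $\{W<0\}$ has constant probability and nothing in your sketch controls its contribution. Since the "absorption into the slack" is never made quantitative, what you have is a proof strategy valid only in the regime ($\lambda\ge1$, $\theta$ bounded away from $0$, $r$ large), not a proof of the stated theorem.

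For what it is worth, your route is more careful than the paper's own argument, which (i) invokes Jensen's inequality using concavity of $\varsigma$ on $\mathbb{R}_+$ even though $g(\bm{h})$ takes negative values with positive probability, and (ii) then pulls the constant $C_3=\lambda$ outside the sigmoid, in effect asserting $\varsigma(\lambda x)=\lambda\varsigma(x)$. Your identification of the low-noise condition $\lambda\ge1$ is not incidental but necessary for the statement itself: at $\theta=0$ the two class distributions coincide, $g\equiv 0$, so $\xi(0)=\frac{1}{2}$, while the claimed bound reads $\frac{1}{2}\le\frac{\lambda}{2}$, which is false whenever $2\sigma^2(\sigma^2+1)>1$. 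So any complete proof must assume $\lambda \ge 1$ (as yours does implicitly), but your remaining gap on $\{W<0\}$ is genuine and closing it needs a new idea --- for instance a majorant valid on all of $\mathbb{R}$, or a separate direct treatment of the small-$\theta$ regime where $W$ itself is uniformly small.
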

where $r$ is the rank of $\bm{U}_0$ and $\bm{U}_1$.
The proof is provided in Appendix~\ref{proof:score}. Note that the upper bound is monotonically increasing in $\theta$. As $\theta \to 0$, the subspaces $\text{span}(\bm{U}_0)$ and $\text{span}(\bm{U}_1)$ become more aligned, making the two classes less distinguishable. Consequently, the upper bound of the expected confidence score decreases, reflecting the reduced separability between classes.

\begin{figure*}[!htbp]
    \centering
    \includegraphics[width=.65\linewidth]{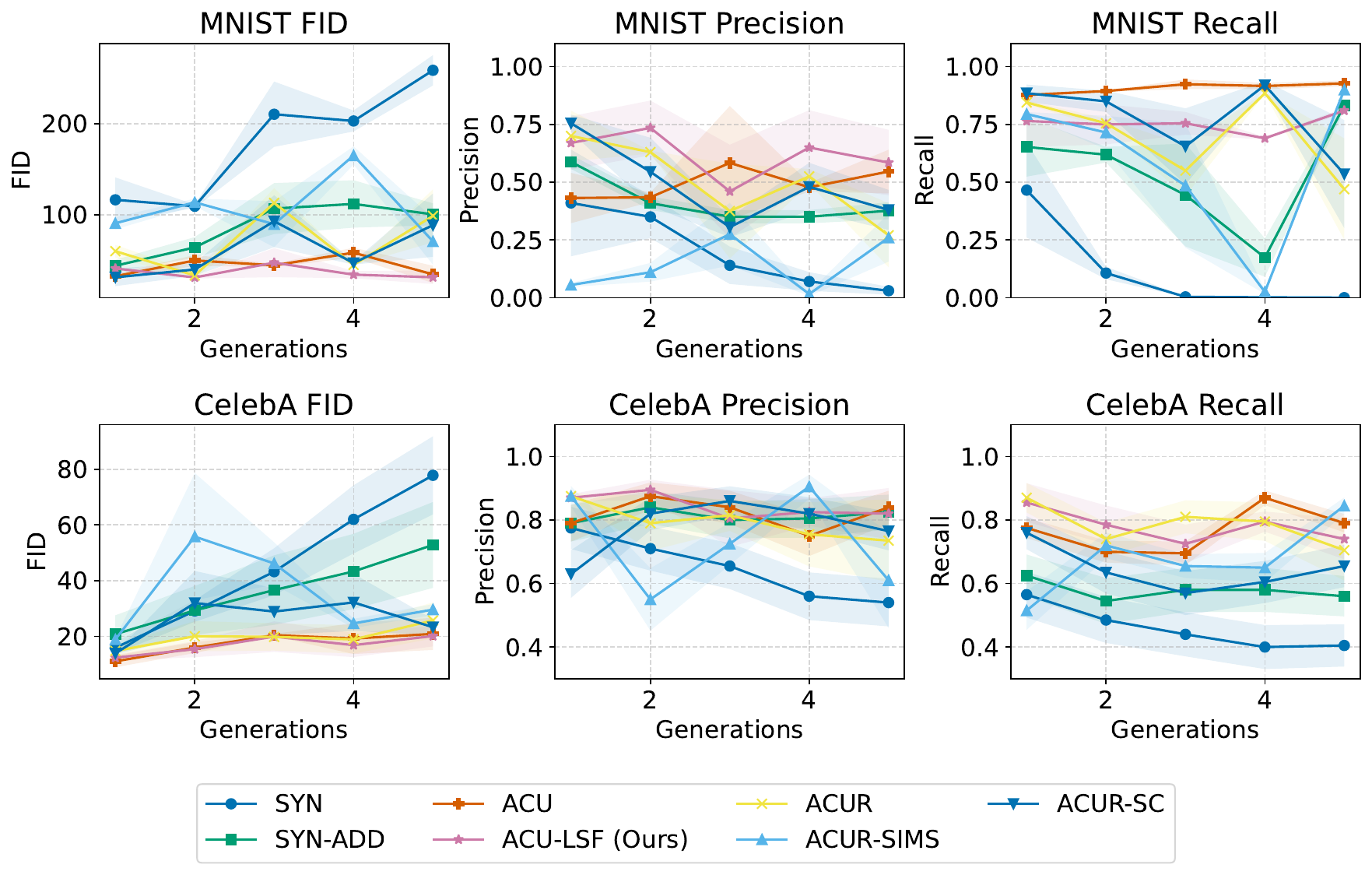}
    \caption{Performance comparison on MNIST and CelebA across three metrics: FID, precision, and recall. FID measures distributional distance between real and synthetic images (lower is better), while precision and recall assess fidelity and diversity of generated samples, respectively (higher is better). SYN suffers from model collapse. SYN-ADD partially alleviates it via fresh real data. ACU, ACUR, and ACU-LSF maintain stable metrics across generations. ACU-LSF achieves lower FID and higher fidelity than ACUR. ACUR-SIMS exhibits instability, while ACUR-SC suffers from reduced recall and elevated FID.}
    \label{fig:celeba_baselines}
\end{figure*}
\section{Experiments}
\label{sec:exp}

\underline{\textbf{Datasets.}} We evaluate our approach by training DDPM diffusion models~\citep{ddpm} for conditional image synthesis~\cite{beat_gan} on three real-world datasets: 
(i) \textbf{MNIST}~\citep{mnist}: A dataset of 60,000 grayscale handwritten digits across 10 classes, with image size $28 \times 28$.
 (ii) \textbf{CIFAR-10}~\citep{cifar10}: A collection of 60,000 $32 \times 32$ RGB images spanning 10 object categories.
   (iii) \textbf{CelebA}~\citep{celeba}: A dataset of 200,000 celebrity face images, each annotated with 40 binary attributes\footnote{We downsample each image to $64 \times 64$ and define 4 classes based on the intersection of two attributes: “Male” and “Smile.”}.

\noindent\underline{\textbf{Baselines.}} For each dataset, we first train a baseline model on the full dataset. We then iteratively fine-tune the model for five self-consuming generations.
Our method, \textit{ACU-LSF}, is compared with the following five baselines:

\begin{enumerate}[leftmargin=*,topsep=0cm,itemsep=0cm]
    \item \textbf{SYN}: Each generation is trained purely on synthetic data generated by the previous model.
    
    \item \textbf{SYN-ADD}~\citep{mad}: A hybrid setting where each generation's training data consists of 70\% synthetic images and 30\% newly added real images. The 30\% proportion is chosen to approximate the average percentage of real images present in the accumulated datasets over five generations.
    
    \item \textbf{ACU}~\citep{accumulate}: Historical real and synthetic data are accumulated across generations, resulting in a linearly growing training set. A subset of real data is sampled for accumulation in each generation, matching the size of newly generated synthetic data.
    
    \item \textbf{ACUR}~\citep{remove}: A fixed-size subset is randomly sampled from the accumulated dataset used in ACU. This setting evaluates the impact of maintaining a fixed training budget.
    
    \item \textbf{ACUR-SIMS}~\citep{sims}: This method extrapolates the score function to enhance fidelity. At generation $k$, the score function used for sampling is:
    \[
    {\bm{s}}^{(k)}_{\theta}(\bm{x}_t, t) = (1+\omega) \bm{s}^{(k-1)}_{\theta}(\bm{x}_t, t) - \omega \widehat{\bm{s}}^{(k)}_{\theta}(\bm{x}_t, t),
    \]
    where $\widehat{\bm{s}}^{(k)}_{\theta}(\cdot)$ corresponds to the model trained with the dataset constructed in the same way as ACUR, $\omega$ is a hyperparameter controlling the strength of extrapolation. The synthetic dataset $\widehat{\mathcal{D}}^{(k)}$ is sampled using this new score function ${\bm{s}}^{(k)}_{\theta}(\cdot)$.
    
    \item \textbf{ACUR-SC}~\citep{self_correction}: This method clusters real images to obtain class centers and maps each synthetic image to the closest center during training. Besides, the training set is constructed in the same way as ACUR.
\end{enumerate}

\noindent\underline{\textbf{Evaluation metrics.}} Each generation uses 1,000 samples for fine-tuning (except ACU, which accumulates an additional 1,000 images per generation). Training is conducted for three epochs per generation. After each generation, 10,000 images are sampled to evaluate three standard metrics: 
    (i) \textbf{Fr\'{e}chet Inception Distance (FID)}~\citep{fid} which  quantifies the distance between real and synthetic image distributions. (ii) \textbf{Precision}~\citep{improved_metric} which measures how many synthetic samples fall on the real data manifold (fidelity).
(iii) \textbf{Recall}~\citep{improved_metric} which measures how many real samples fall on the synthetic manifold (diversity).
Note that lower FID and higher precision and recall values indicate better generative performance. Samples of synthetic images generated by self-consuming models are provided in Fig.~\ref{fig:cifar10_grid} (Appendix~\ref{appendix:exp}) to show the model quality.

\noindent\underline{\textbf{Results.}} We primarily present results for MNIST and CelebA in Figure~\ref{fig:celeba_baselines}, while additional results for CIFAR-10 are provided in Appendix~\ref{appendix:exp}.
As expected, performance under \textbf{SYN} deteriorates over generations due to model collapse. \textbf{SYN-ADD} partially mitigates collapse by introducing fresh real data. Prior work has shown that increasing the real-to-synthetic ratio improves performance~\citep{remove}, but collecting fresh real data may not always be feasible. Our goal is to demonstrate that effective filtering without requiring additional real data can offer comparable benefits.
\textbf{ACU} maintains stable performance across generations, in line with previous observations~\citep{accumulate}. By randomly sampling from the accumulated dataset, \textbf{ACUR} approximates the performance of ACU while adhering to a fixed training budget. As demonstrated in Figure~\ref{fig:mnist_clf_filter}, our method \textbf{ACU-LSF} successfully removes recent synthetic images and retains more real images, resulting in a distribution that more closely aligns with the original real data. Consequently, ACU-LSF achieves comparable precision and recall, and a lower FID than other fixed-budget baselines. Besides, ACU retrains a diffusion model with 32 M parameters on 582 MB of CelebA data (5 K images at generation 5), and the cost increases with training budget and generations. In contrast, ACU-LSF trains a 65 K-parameter probing classifier on 109 MB of features once at the start, greatly reducing storage and computation.

In contrast, \textbf{ACUR-SIMS} and \textbf{ACUR-SC} perform less favorably, even when combined with accumulation. SIMS~\citep{sims} was designed to improve fidelity within a single generation via extrapolation. However, over multiple generations, its score functions become unstable due to repeated extrapolation, potentially drifting further from the real data distribution. As a result, metrics fluctuate and performance deteriorates.
ACUR-SC employs a self-correction mechanism by aligning synthetic samples with cluster centers of real images. While this exploits structural information from real data, it reduces sample diversity, leading to lower recall and higher FID despite moderate precision.

Overall, our results demonstrate that LSF based on latent representation quality is an effective alternative to adding real data or accumulation, offering a scalable and practical solution to mitigate model collapse in self-consuming training.




\section*{Acknowledgements}
This work was funded in part by the National Science Foundation under award number IIS-2202699,
IIS-2416895, and IIS-2416896.

\bibliography{ref}

@inproceedings{
wei2025selfconsuming,
title={Self-Consuming Generative Models with Adversarially Curated Data},
author={Xiukun Wei and Xueru Zhang},
booktitle={Forty-second International Conference on Machine Learning},
year={2025},
url={https://openreview.net/forum?id=UWWNxyIT1h}
}

@Article{collapse,
    author= {Shumailov, Ilia and Shumaylov, Zakhar and Zhao, Yiren and Papernot, Nicolas and Anderson, Ross and Gal, Yarin},
    title= {{AI} models collapse when trained on recursively generated data},
    volume= {631},
    issn= {1476-4687},
    url= {https://doi.org/10.1038/s41586-024-07566-y},
    doi= {10.1038/s41586-024-07566-y},
    number= {8022},
    journal= {Nature},
    month= {Jul},
    year= {2024},
    pages= {755--759},
}

@inproceedings{amplification,
    author = {Taori, Rohan and Hashimoto, Tatsunori B.},
    title = {Data Feedback Loops: Model-Driven Amplification of Dataset Biases},
    year = {2023},
    publisher = {JMLR.org},
    booktitle = {Proceedings of the 40th International Conference on Machine Learning},
    articleno = {1412},
    numpages = {38},
    location = {Honolulu, Hawaii, USA},
    series = {ICML'23}
}

@inproceedings{FairnessFeedback,
    author = {Wyllie, Sierra and Shumailov, Ilia and Papernot, Nicolas},
    title = {Fairness Feedback Loops: Training on Synthetic Data Amplifies Bias},
    year = {2024},
    publisher = {Association for Computing Machinery},
    address = {New York, NY, USA},
    url = {https://doi.org/10.1145/3630106.3659029},
    doi = {10.1145/3630106.3659029},
    booktitle = {Proceedings of the 2024 ACM Conference on Fairness, Accountability, and Transparency},
    pages = {2113–2147},
    numpages = {35},
    keywords = {Intersectionality, Machine learning, algorithmic bias, artificial intelligence, distribution shift, fair machine learning},
    location = {Rio de Janeiro, Brazil},
    series = {FAccT '24}
}

@inproceedings{
xie2024automating,
title={Automating Data Annotation under Strategic Human Agents: Risks and Potential Solutions},
author={Tian Xie and Xueru Zhang},
booktitle={The Thirty-eighth Annual Conference on Neural Information Processing Systems},
year={2024},
url={https://openreview.net/forum?id=2UJLv3KPGO}
}

@inproceedings{unet,
  title={U-net: Convolutional networks for biomedical image segmentation},
  author={Ronneberger, Olaf and Fischer, Philipp and Brox, Thomas},
  booktitle={Medical image computing and computer-assisted intervention--MICCAI 2015: 18th international conference, Munich, Germany, October 5-9, 2015, proceedings, part III 18},
  pages={234--241},
  year={2015},
  organization={Springer}
}

@article{vae,
  title={Auto-encoding variational Bayes},
  author={Chen, Yankun and Liu, Jingxuan and Peng, Lingyun and Wu, Yiqi and Xu, Yige and Zhang, Zhanhao},
  journal={Cambridge Explorations in Arts and Sciences},
  volume={2},
  number={1},
  year={2024}
}

@article{diff_cluster,
  title={Diffusion models learn low-dimensional distributions via subspace clustering},
  author={Wang, Peng and Zhang, Huijie and Zhang, Zekai and Chen, Siyi and Ma, Yi and Qu, Qing},
  journal={arXiv preprint arXiv:2409.02426},
  year={2024}
}

@inproceedings{ole,
  title={Ole: Orthogonal low-rank embedding-a plug and play geometric loss for deep learning},
  author={Lezama, Jos{\'e} and Qiu, Qiang and Mus{\'e}, Pablo and Sapiro, Guillermo},
  booktitle={Proceedings of the IEEE Conference on Computer Vision and Pattern Recognition},
  pages={8109--8118},
  year={2018}
}

@article{jensen,
  title={A modern introduction to probability and statistics},
  author={Kraaikamp, FDC and Meester, HLL},
  journal={Springer: Berlin/Heidelberg, Germany},
  year={2005}
}

@article{ddpm,
  title={Denoising diffusion probabilistic models},
  author={Ho, Jonathan and Jain, Ajay and Abbeel, Pieter},
  journal={Advances in neural information processing systems},
  volume={33},
  pages={6840--6851},
  year={2020}
}

@article{beat_gan,
  title={Diffusion models beat gans on image synthesis},
  author={Dhariwal, Prafulla and Nichol, Alexander},
  journal={Advances in neural information processing systems},
  volume={34},
  pages={8780--8794},
  year={2021}
}

@article{gan,
  title={Generative adversarial networks},
  author={Goodfellow, Ian and Pouget-Abadie, Jean and Mirza, Mehdi and Xu, Bing and Warde-Farley, David and Ozair, Sherjil and Courville, Aaron and Bengio, Yoshua},
  journal={Communications of the ACM},
  volume={63},
  number={11},
  pages={139--144},
  year={2020},
  publisher={ACM New York, NY, USA}
}

@article{mnist,
  title={MNIST handwritten digit database},
  author={LeCun, Yann and Cortes, Corinna and Burges, CJ},
  journal={ATT Labs [Online]. Available: http://yann.lecun.com/exdb/mnist},
  volume={2},
  year={2010}
}

@article{cifar10,
  title={Learning multiple layers of features from tiny images},
  author={Krizhevsky, Alex and Hinton, Geoffrey and others},
  year={2009},
  publisher={Toronto, ON, Canada}
}

@inproceedings{celeba,
  title = {Deep Learning Face Attributes in the Wild},
  author = {Liu, Ziwei and Luo, Ping and Wang, Xiaogang and Tang, Xiaoou},
  booktitle = {Proceedings of International Conference on Computer Vision (ICCV)},
  month = {December},
  year = {2015} 
}

@article{accumulate,
  title={Is model collapse inevitable? breaking the curse of recursion by accumulating real and synthetic data},
  author={Gerstgrasser, Matthias and Schaeffer, Rylan and Dey, Apratim and Rafailov, Rafael and Sleight, Henry and Hughes, John and Korbak, Tomasz and Agrawal, Rajashree and Pai, Dhruv and Gromov, Andrey and others},
  journal={arXiv preprint arXiv:2404.01413},
  year={2024}
}

@inproceedings{
mad,
title={Self-Consuming Generative Models Go {MAD}},
author={Sina Alemohammad and Josue Casco-Rodriguez and Lorenzo Luzi and Ahmed Imtiaz Humayun and Hossein Babaei and Daniel LeJeune and Ali Siahkoohi and Richard Baraniuk},
booktitle={The Twelfth International Conference on Learning Representations},
year={2024},
url={https://openreview.net/forum?id=ShjMHfmPs0}
}

@article{sims,
  title={Self-improving diffusion models with synthetic data},
  author={Alemohammad, Sina and Humayun, Ahmed Imtiaz and Agarwal, Shruti and Collomosse, John and Baraniuk, Richard},
  journal={arXiv preprint arXiv:2408.16333},
  year={2024}
}

@inproceedings{self_correction,
  author={Nate Gillman and Michael Freeman and Daksh Aggarwal and Chia-Hong Hsu and Calvin Luo and Yonglong Tian and Chen Sun},
  title={Self-Correcting Self-Consuming Loops for Generative Model Training},
  year={2024},
  cdate={1704067200000},
  url={https://openreview.net/forum?id=i0nVanexij},
  booktitle={ICML},
}

@article{fid,
  title={Gans trained by a two time-scale update rule converge to a local nash equilibrium},
  author={Heusel, Martin and Ramsauer, Hubert and Unterthiner, Thomas and Nessler, Bernhard and Hochreiter, Sepp},
  journal={Advances in neural information processing systems},
  volume={30},
  year={2017}
}

@article{improved_metric,
  title={Improved precision and recall metric for assessing generative models},
  author={Kynk{\"a}{\"a}nniemi, Tuomas and Karras, Tero and Laine, Samuli and Lehtinen, Jaakko and Aila, Timo},
  journal={Advances in neural information processing systems},
  volume={32},
  year={2019}
}

@article{recrusion,
  title={The curse of recursion: Training on generated data makes models forget},
  author={Shumailov, Ilia and Shumaylov, Zakhar and Zhao, Yiren and Gal, Yarin and Papernot, Nicolas and Anderson, Ross},
  journal={arXiv preprint arXiv:2305.17493},
  year={2023}
}

@inproceedings{stable,
title={On the Stability of Iterative Retraining of Generative Models on their own Data},
author={Quentin Bertrand and Joey Bose and Alexandre Duplessis and Marco Jiralerspong and Gauthier Gidel},
booktitle={The Twelfth International Conference on Learning Representations},
year={2024},
url={https://openreview.net/forum?id=JORAfH2xFd}
}

@article{dynamic,
  title={Understanding Representation Dynamics of Diffusion Models via Low-Dimensional Modeling},
  author={Li, Xiao and Zhang, Zekai and Li, Xiang and Chen, Siyi and Zhu, Zhihui and Wang, Peng and Qu, Qing},
  journal={arXiv preprint arXiv:2502.05743},
  year={2025}
}

@article{estimate_grad,
  title={Generative modeling by estimating gradients of the data distribution},
  author={Song, Yang and Ermon, Stefano},
  journal={Advances in neural information processing systems},
  volume={32},
  year={2019}
}

@inproceedings{synthetic_property,
  title={Intriguing properties of synthetic images: from generative adversarial networks to diffusion models},
  author={Corvi, Riccardo and Cozzolino, Davide and Poggi, Giovanni and Nagano, Koki and Verdoliva, Luisa},
  booktitle={Proceedings of the IEEE/CVF conference on computer vision and pattern recognition},
  pages={973--982},
  year={2023}
}

@inproceedings{langevin,
  title={Bayesian learning via stochastic gradient Langevin dynamics},
  author={Welling, Max and Teh, Yee W},
  booktitle={Proceedings of the 28th international conference on machine learning (ICML-11)},
  pages={681--688},
  year={2011},
  organization={Citeseer}
}

@inproceedings{semantic_latent,
title={Diffusion Models Already Have A Semantic Latent Space},
author={Mingi Kwon and Jaeseok Jeong and Youngjung Uh},
booktitle={The Eleventh International Conference on Learning Representations },
year={2023},
url={https://openreview.net/forum?id=pd1P2eUBVfq}
}

@article{remove,
  title={Collapse or Thrive? Perils and Promises of Synthetic Data in a Self-Generating World},
  author={Kazdan, Joshua and Schaeffer, Rylan and Dey, Apratim and Gerstgrasser, Matthias and Rafailov, Rafael and Donoho, David L and Koyejo, Sanmi},
  journal={arXiv preprint arXiv:2410.16713},
  year={2024}
}

@misc{synthetic_caption,
      title={Synth$^2$: Boosting Visual-Language Models with Synthetic Captions and Image Embeddings}, 
      author={Sahand Sharifzadeh and Christos Kaplanis and Shreya Pathak and Dharshan Kumaran and Anastasija Ilic and Jovana Mitrovic and Charles Blundell and Andrea Banino},
      year={2024},
      eprint={2403.07750},
      archivePrefix={arXiv},
      primaryClass={cs.CV},
      url={https://arxiv.org/abs/2403.07750}, 
}

@inproceedings{
human_perception,
title={Seeing is not always believing: Benchmarking Human and Model Perception of {AI}-Generated Images},
author={Zeyu Lu and Di Huang and LEI BAI and Jingjing Qu and Chengyue Wu and Xihui Liu and Wanli Ouyang},
booktitle={Thirty-seventh Conference on Neural Information Processing Systems Datasets and Benchmarks Track},
year={2023},
url={https://openreview.net/forum?id=Xoi31wJ5iI}
}

@inproceedings{ssc-omp,
  title={Scalable sparse subspace clustering by orthogonal matching pursuit},
  author={You, Chong and Robinson, Daniel and Vidal, Ren{\'e}},
  booktitle={Proceedings of the IEEE conference on computer vision and pattern recognition},
  pages={3918--3927},
  year={2016}
}

@inproceedings{
nuclear_cos,
title={{SSOLE}: Rethinking Orthogonal Low-rank Embedding for Self-Supervised Learning},
author={Lun Huang and Qiang Qiu and Guillermo Sapiro},
booktitle={The Thirteenth International Conference on Learning Representations},
year={2025},
url={https://openreview.net/forum?id=zBgiCWCxJB}
}

@inproceedings{verification,
title={Beyond Model Collapse: Scaling Up with Synthesized Data Requires Verification},
author={Yunzhen Feng and Elvis Dohmatob and Pu Yang and Francois Charton and Julia Kempe},
booktitle={The Thirteenth International Conference on Learning Representations},
year={2025},
url={https://openreview.net/forum?id=MQXrTMonT1}
}

@article{news_bias_amplify,
  title={Bias Amplification: Language Models as Increasingly Biased Media},
  author={Wang, Ze and Wu, Zekun and Zhang, Jeremy and Jain, Navya and Guan, Xin and Koshiyama, Adriano},
  journal={arXiv preprint arXiv:2410.15234},
  year={2024}
}

@inproceedings{
recrusive_stability,
title={A Theoretical Perspective: How to Prevent Model Collapse in Self-consuming Training Loops},
author={Shi Fu and Yingjie Wang and Yuzhu Chen and Xinmei Tian and Dacheng Tao},
booktitle={The Thirteenth International Conference on Learning Representations},
year={2025},
url={https://openreview.net/forum?id=WttfQGwpES}
}

@misc{self_improve_merging,
      title={Superficial Self-Improved Reasoners Benefit from Model Merging}, 
      author={Xiangchi Yuan and Chunhui Zhang and Zheyuan Liu and Dachuan Shi and Soroush Vosoughi and Wenke Lee},
      year={2025},
      eprint={2503.02103},
      archivePrefix={arXiv},
      primaryClass={cs.CL},
      url={https://arxiv.org/abs/2503.02103}, 
}

@misc{detection_model_collapse,
      title={Machine-generated text detection prevents language model collapse}, 
      author={George Drayson and Emine Yilmaz and Vasileios Lampos},
      year={2025},
      eprint={2502.15654},
      archivePrefix={arXiv},
      primaryClass={cs.CL},
      url={https://arxiv.org/abs/2502.15654}, 
}

@article{vgg,
  title={Very deep convolutional networks for large-scale image recognition},
  author={Simonyan, Karen and Zisserman, Andrew},
  journal={arXiv preprint arXiv:1409.1556},
  year={2014}
}

@inproceedings{self_consuming_diffusion,
  title={Towards Theoretical Understandings of Self-Consuming Generative Models},
  author={Fu, Shi and Zhang, Sen and Wang, Yingjie and Tian, Xinmei and Tao, Dacheng},
  booktitle={International Conference on Machine Learning},
  pages={14228--14255},
  year={2024},
  organization={PMLR}
}

@inproceedings{
hallucination,
title={Understanding Hallucinations in Diffusion Models through Mode Interpolation},
author={Sumukh K Aithal and Pratyush Maini and Zachary Chase Lipton and J Zico Kolter},
booktitle={The Thirty-eighth Annual Conference on Neural Information Processing Systems},
year={2024},
url={https://openreview.net/forum?id=aNTnHBkw4T}
}


\clearpage
\appendix
\setcounter{secnumdepth}{1}

\section{Proof of Theorem~\ref{theorem:ole_lob}}
\label{proof:ole_lob}

We first propose the following lemmas that will be used later.

\begin{lemma}
\label{lemma:nuclear_cos}
\textit{For matrix $\bm{V} \in \mathbb{R}^{d \times N}$ which contains $N$ $d$-dimensional unit vectors $\{\bm{v_i}\}$, $d > N$, the nuclear norm of $\bm{V}$ is bounded by:}
\[
\|\bm{V}\|_* 
\leq \sqrt{N} \cdot \sqrt{\overline{\cos\theta}} + \sqrt{N(N - 1)} \cdot \sqrt{1 - \overline{\cos\theta}},
\]
where
\[
\overline{\cos\theta} = \frac{1}{N^2} \sum_{i=1}^N \sum_{j=1}^N \cos\theta_{ij} = \frac{1}{N^2} \sum_{i=1}^N \sum_{j=1}^N \bm{v_i}^\top \bm{v_j}
\]
is the average cosine similarity between unit vectors.
\end{lemma}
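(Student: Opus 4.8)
The plan is to avoid the tempting but flawed route of isolating the largest singular value, and instead to decompose $\bm{V}$ into a rank-one ``mean'' component and a ``centered'' component, bounding each by a term that matches the statement exactly. Write $\bar{\bm{v}} = \frac{1}{N}\sum_{i=1}^N \bm{v_i}$ for the mean of the columns and set $\bm{w_i} = \bm{v_i} - \bar{\bm{v}}$, so that $\bm{V} = \bar{\bm{v}}\bm{1}^\top + \bm{W}$ with $\bm{W} = [\bm{w_1},\dots,\bm{w_N}]$ and $\bm{1}$ the all-ones vector in $\mathbb{R}^N$. The identity driving everything is $\|\bar{\bm{v}}\|^2 = \frac{1}{N^2}\big\|\sum_i \bm{v_i}\big\|^2 = \frac{1}{N^2}\sum_{i,j}\bm{v_i}^\top\bm{v_j} = \overline{\cos\theta}$, which ties the mean vector directly to the quantity appearing in the bound.

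First I would apply the triangle inequality for the nuclear norm (which is a genuine matrix norm) to obtain $\|\bm{V}\|_* \le \|\bar{\bm{v}}\bm{1}^\top\|_* + \|\bm{W}\|_*$. The first term is a rank-one outer product, whose nuclear norm equals its single nonzero singular value $\|\bar{\bm{v}}\|_2\,\|\bm{1}\|_2 = \sqrt{\overline{\cos\theta}}\cdot\sqrt{N}$, producing exactly the first term $\sqrt{N}\sqrt{\overline{\cos\theta}}$ of the claimed bound.

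For the centered term, two facts combine. By construction $\bm{W}\bm{1} = \sum_i \bm{w_i} = \bm{0}$, so $\bm{1}$ lies in the null space of the map $\bm{W}$ and hence $\operatorname{rank}(\bm{W}) \le N-1$. A short expansion gives the Frobenius norm: $\|\bm{W}\|_F^2 = \sum_i \|\bm{v_i} - \bar{\bm{v}}\|^2 = N - 2N\|\bar{\bm{v}}\|^2 + N\|\bar{\bm{v}}\|^2 = N(1 - \overline{\cos\theta})$, using that each $\bm{v_i}$ is a unit vector and $\sum_i \bm{v_i}^\top \bar{\bm{v}} = N\|\bar{\bm{v}}\|^2$. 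Since any rank-$\rho$ matrix satisfies $\|\cdot\|_* \le \sqrt{\rho}\,\|\cdot\|_F$ by Cauchy--Schwarz applied to its singular values, I get $\|\bm{W}\|_* \le \sqrt{N-1}\cdot\sqrt{N(1-\overline{\cos\theta})} = \sqrt{N(N-1)}\sqrt{1-\overline{\cos\theta}}$, the second term. Adding the two bounds completes the argument (the assumption $d > N$ simply guarantees $\bm{V}$ has at most $N$ singular values, keeping the count consistent).

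The main obstacle here is conceptual rather than computational. The naive approach---peel off the top singular value $\sigma_1$, note $\sigma_1^2 = \lambda_{\max} \ge N\overline{\cos\theta}$ from the Rayleigh quotient against $\bm{1}$, and bound the remaining spectrum by Cauchy--Schwarz---does \emph{not} work, because the resulting estimate $\sqrt{\lambda_1} + \sqrt{(N-1)(N-\lambda_1)}$ is not monotone in the right direction and can strictly exceed the target (a quick three-vector example with pairwise angles $120^\circ$ exhibits this gap). Recognizing that the correct rank-one piece to extract is the mean-direction component $\bar{\bm{v}}\bm{1}^\top$---whose nuclear norm is precisely $\sqrt{N\overline{\cos\theta}}$---and that its orthogonal (centered) complement automatically loses one unit of rank, is the crux. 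Everything after that is a one-line triangle inequality, a rank-one singular value, and an elementary Frobenius computation.
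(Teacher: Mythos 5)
Your proof is correct, and it necessarily takes a different route from the paper, because the paper does not actually prove this lemma: its appendix states only that the result is Theorem~4.1 of an external reference and imports it as a black box. Your argument fills that gap with a self-contained derivation, and every step checks out. The split $\bm{V} = \bar{\bm{v}}\bm{1}^\top + \bm{W}$ is exact; the identity $\|\bar{\bm{v}}\|^2 = \frac{1}{N^2}\sum_{i,j}\bm{v}_i^\top\bm{v}_j = \overline{\cos\theta}$ makes the rank-one piece contribute precisely $\|\bar{\bm{v}}\|_2\,\|\bm{1}\|_2 = \sqrt{N}\sqrt{\overline{\cos\theta}}$ (and, as a bonus, certifies $0 \le \overline{\cos\theta} \le 1$, so both square roots in the statement are real); $\bm{W}\bm{1} = \bm{0}$ gives $\mathrm{rank}(\bm{W}) \le N-1$; the computation $\|\bm{W}\|_F^2 = N - 2N\|\bar{\bm{v}}\|^2 + N\|\bar{\bm{v}}\|^2 = N(1-\overline{\cos\theta})$ is right; and the final estimate $\|\bm{W}\|_* \le \sqrt{\mathrm{rank}(\bm{W})}\,\|\bm{W}\|_F$ is exactly the paper's Lemma~\ref{lemma:nuclear_fnorm}. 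In fact your proof uses only tools the paper already lists in its appendix (the triangle inequality of Lemma~\ref{lemma:nuclear_triangle} and the rank--Frobenius bound of Lemma~\ref{lemma:nuclear_fnorm}), so what it buys is a fully self-contained appendix in place of a dependency on an outside theorem, at the cost of a few extra lines. Your cautionary remark about the singular-value-peeling route is also accurate: for three coplanar unit vectors at pairwise $120^{\circ}$ one has $\overline{\cos\theta}=0$, the claimed bound equals the true nuclear norm $\sqrt{6}$, yet $\sigma_1 + \sqrt{(N-1)(N-\sigma_1^2)} = \sqrt{3/2}+\sqrt{3} > \sqrt{6}$; that route only recovers the lemma in the regime $\overline{\cos\theta} \ge 1/N$ (where monotonicity of $\sigma_1 + \sqrt{(N-1)(N-\sigma_1^2)}$ for $\sigma_1 \ge 1$ applies), so identifying the mean-direction component as the correct rank-one piece is indeed the crux.
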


Lemma~\ref{lemma:nuclear_cos} is proposed in~\cite{nuclear_cos} as Theorem 4.1.

\begin{lemma}
\label{lemma:nuclear_triangle}
    \textbf{(Triangle inequality of nuclear norm)} $\|\bm{A} + \bm{B}\|_* \leq \|\bm{A}\|_* + \|\bm{B}\|_*$.
\end{lemma}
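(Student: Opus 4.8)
The plan is to prove the triangle inequality by first establishing the dual (variational) characterization of the nuclear norm and then exploiting the subadditivity of a supremum. Concretely, I would first show that for any matrix $\bm{X} \in \mathbb{R}^{d \times N}$,
\[
\|\bm{X}\|_* = \max_{\|\bm{Y}\|_{\mathrm{op}} \leq 1} \operatorname{tr}(\bm{Y}^\top \bm{X}),
\]
where $\|\cdot\|_{\mathrm{op}}$ denotes the spectral (operator) norm, i.e., the largest singular value. This identity expresses the nuclear norm as the dual norm of the spectral norm, and once it is in hand the triangle inequality is almost immediate.

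To establish the dual characterization, I would write the SVD $\bm{X} = \sum_i \sigma_i \bm{u}_i \bm{v}_i^\top$ with orthonormal families $\{\bm{u}_i\}$, $\{\bm{v}_i\}$ and $\sigma_i \geq 0$, so that $\|\bm{X}\|_* = \sum_i \sigma_i$. For the upper bound, expanding the trace gives $\operatorname{tr}(\bm{Y}^\top \bm{X}) = \sum_i \sigma_i \langle \bm{Y}\bm{v}_i, \bm{u}_i \rangle$, and Cauchy--Schwarz together with $\|\bm{Y}\bm{v}_i\| \leq \|\bm{Y}\|_{\mathrm{op}} \|\bm{v}_i\| = \|\bm{Y}\|_{\mathrm{op}}$ bounds each inner product by $\|\bm{Y}\|_{\mathrm{op}}$, yielding $\operatorname{tr}(\bm{Y}^\top \bm{X}) \leq \|\bm{Y}\|_{\mathrm{op}} \sum_i \sigma_i = \|\bm{Y}\|_{\mathrm{op}} \|\bm{X}\|_*$. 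For attainment, choosing $\bm{Y} = \sum_i \bm{u}_i \bm{v}_i^\top$ produces a matrix whose nonzero singular values are all equal to $1$, hence $\|\bm{Y}\|_{\mathrm{op}} = 1$, and a direct computation gives $\operatorname{tr}(\bm{Y}^\top \bm{X}) = \sum_i \sigma_i = \|\bm{X}\|_*$, so the supremum is attained and equals $\|\bm{X}\|_*$.

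With the variational formula in place, the triangle inequality follows from linearity of the trace and the fact that the maximum of a sum is at most the sum of the maxima: for any admissible $\bm{Y}$ with $\|\bm{Y}\|_{\mathrm{op}} \leq 1$,
\[
\operatorname{tr}(\bm{Y}^\top (\bm{A} + \bm{B})) = \operatorname{tr}(\bm{Y}^\top \bm{A}) + \operatorname{tr}(\bm{Y}^\top \bm{B}) \leq \|\bm{A}\|_* + \|\bm{B}\|_*,
\]
where the inequality applies the upper bound from the dual characterization to each term separately. Taking the maximum over all such $\bm{Y}$ on the left-hand side yields $\|\bm{A} + \bm{B}\|_* \leq \|\bm{A}\|_* + \|\bm{B}\|_*$, as claimed.

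The main obstacle is the dual characterization, and within it the upper bound $\operatorname{tr}(\bm{Y}^\top \bm{X}) \leq \|\bm{Y}\|_{\mathrm{op}} \|\bm{X}\|_*$; everything else is bookkeeping. I would note that this step is exactly a lightweight form of von Neumann's trace inequality, but the per-term Cauchy--Schwarz argument sidesteps the need for its full strength. An alternative route would be to invoke that the nuclear norm is a unitarily invariant norm, expressible as a symmetric gauge function of the singular values, and appeal to Ky Fan's inequality; however, the dual-norm argument above is more self-contained and elementary, which is why I would prefer it.
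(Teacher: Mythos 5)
Your proof is correct, but there is nothing in the paper to compare it against: the paper states Lemma~\ref{lemma:nuclear_triangle} without proof, treating it as a textbook fact (the nuclear norm is a norm, hence subadditive), just as it does Lemmas~\ref{lemma:nuclear_fnorm} and~\ref{lemma:op_fnorm}. Your route---establishing the dual characterization $\|\bm{X}\|_* = \max_{\|\bm{Y}\|_{op} \leq 1} \operatorname{tr}(\bm{Y}^\top \bm{X})$ and then using linearity of the trace plus subadditivity of the maximum---is the standard self-contained derivation, and every step checks out: the Cauchy--Schwarz bound $\operatorname{tr}(\bm{Y}^\top \bm{X}) \leq \|\bm{Y}\|_{op}\|\bm{X}\|_*$ is valid, and your attainment witness $\bm{Y} = \sum_i \bm{u}_i \bm{v}_i^\top$ indeed has operator norm $1$ (for $\bm{X} \neq \bm{0}$; the case $\bm{X} = \bm{0}$ is trivial) and achieves $\operatorname{tr}(\bm{Y}^\top \bm{X}) = \sum_i \sigma_i$ by orthonormality of the singular vector families. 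Incidentally, your dual-norm identity is the same Schatten-norm duality underlying the paper's Lemma~\ref{lemma:nuclear_opnorm} (the H\"older instance pairing $\|\cdot\|_*$ with $\|\cdot\|_{op}$), so your argument is consonant with the toolkit the paper already invokes; it simply proves what the paper takes for granted.
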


\begin{lemma}\label{lemma:nuclear_fnorm}
    $\|\bm{X}\|_F \leq \|\bm{X}\|_* \leq \sqrt{r}\|\bm{X}\|_F$, where $r$ is the rank of $\bm{X}$.
\end{lemma}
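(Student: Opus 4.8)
The plan is to reduce both inequalities to the elementary equivalence between the $\ell_1$ and $\ell_2$ norms on the vector of singular values, accessed through the singular value decomposition of $\bm{X}$. First I would write the SVD of $\bm{X}$ and record that, since $\bm{X}$ has rank $r$, it has exactly $r$ strictly positive singular values $\sigma_1, \ldots, \sigma_r$. By definition the Frobenius norm satisfies $\|\bm{X}\|_F = \sqrt{\sum_{i=1}^r \sigma_i^2}$ and the nuclear norm satisfies $\|\bm{X}\|_* = \sum_{i=1}^r \sigma_i$. Collecting these values into the vector $\bm{\sigma} = (\sigma_1, \ldots, \sigma_r)^\top \in \mathbb{R}^r$, the claim becomes $\|\bm{\sigma}\|_2 \leq \|\bm{\sigma}\|_1 \leq \sqrt{r}\,\|\bm{\sigma}\|_2$, i.e. a purely vector-norm statement in $\mathbb{R}^r$.

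For the lower bound $\|\bm{X}\|_F \leq \|\bm{X}\|_*$, I would expand the square of the nuclear norm and discard the (nonnegative) cross terms:
\[
\|\bm{X}\|_*^2 = \Big(\sum_{i=1}^r \sigma_i\Big)^2 = \sum_{i=1}^r \sigma_i^2 + \sum_{i \neq j} \sigma_i \sigma_j \geq \sum_{i=1}^r \sigma_i^2 = \|\bm{X}\|_F^2,
\]
where the inequality holds because every product $\sigma_i \sigma_j$ is nonnegative (singular values are nonnegative). Taking square roots yields the left inequality.

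For the upper bound, I would apply the Cauchy--Schwarz inequality to the pairing of $\bm{\sigma}$ with the all-ones vector $\bm{1} \in \mathbb{R}^r$:
\[
\|\bm{X}\|_* = \sum_{i=1}^r \sigma_i = \bm{1}^\top \bm{\sigma} \leq \|\bm{1}\|_2 \, \|\bm{\sigma}\|_2 = \sqrt{r}\,\|\bm{X}\|_F,
\]
with the constant $\sqrt{r}$ arising precisely from $\|\bm{1}\|_2 = \sqrt{r}$, that is, from the number of nonzero singular values equaling the rank. This is the only place the rank hypothesis enters, and it is the sole point requiring care: one must count the nonzero singular values as exactly $r$ (rather than over the ambient dimension $\min(d,N)$) so that the factor $\sqrt{r}$ is tight. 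Beyond this bookkeeping there is no genuine obstacle, as the statement is a standard norm equivalence.
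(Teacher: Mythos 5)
Your proof is correct: reducing both inequalities to the $\ell_1$--$\ell_2$ norm equivalence on the vector of the $r$ nonzero singular values, with the lower bound from expanding the square and the upper bound from Cauchy--Schwarz against the all-ones vector, is exactly the canonical argument, and your care in counting the nonzero singular values as $r$ (rather than $\min(d,N)$) is the right point to flag. Note that the paper states this lemma without any proof, treating it as a standard fact, so your write-up simply supplies the standard derivation it implicitly relies on.
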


\begin{lemma}\label{lemma:nuclear_opnorm}
    $\|\bm{AB}\|_F \leq \|\bm{A}\|_{op} \|\bm{B}\|_{*} $, or symmetrically, $\|\bm{AB}\|_F \leq \|\bm{A}\|_{*} \|\bm{B}\|_{op} $, where $\|\cdot\|_{op}$ is the operator norm, i.e, the largest singular value.
\end{lemma}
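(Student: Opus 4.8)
The plan is to establish the first inequality $\|\bm{A}\bm{B}\|_F \le \|\bm{A}\|_{op}\|\bm{B}\|_*$ by chaining two elementary facts, and then obtain the symmetric statement by transposition. The two facts are: (i) the mixed submultiplicativity bound $\|\bm{A}\bm{B}\|_F \le \|\bm{A}\|_{op}\|\bm{B}\|_F$, and (ii) Lemma~\ref{lemma:nuclear_fnorm}, which gives $\|\bm{B}\|_F \le \|\bm{B}\|_*$. Composing these yields $\|\bm{A}\bm{B}\|_F \le \|\bm{A}\|_{op}\|\bm{B}\|_F \le \|\bm{A}\|_{op}\|\bm{B}\|_*$, as required.

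The only step needing justification is (i). Writing $\bm{B} = [\bm{b}_1 \;\cdots\; \bm{b}_N]$ in terms of its columns, we have $\|\bm{A}\bm{B}\|_F^2 = \sum_j \|\bm{A}\bm{b}_j\|_2^2$. Since the operator norm is the largest gain of $\bm{A}$, each term satisfies $\|\bm{A}\bm{b}_j\|_2 \le \|\bm{A}\|_{op}\|\bm{b}_j\|_2$, so $\|\bm{A}\bm{B}\|_F^2 \le \|\bm{A}\|_{op}^2 \sum_j \|\bm{b}_j\|_2^2 = \|\bm{A}\|_{op}^2 \|\bm{B}\|_F^2$. Taking square roots gives (i).

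Alternatively, a self-contained route that directly produces the nuclear norm is to expand $\bm{B}$ in its SVD, $\bm{B} = \sum_k \sigma_k \bm{u}_k \bm{v}_k^\top$ with $\sigma_k$ the singular values and $\{\bm{u}_k\},\{\bm{v}_k\}$ orthonormal. Then $\bm{A}\bm{B} = \sum_k \sigma_k (\bm{A}\bm{u}_k)\bm{v}_k^\top$, and the triangle inequality for $\|\cdot\|_F$ (the Frobenius analogue of Lemma~\ref{lemma:nuclear_triangle}) gives $\|\bm{A}\bm{B}\|_F \le \sum_k \sigma_k \|(\bm{A}\bm{u}_k)\bm{v}_k^\top\|_F$. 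Each rank-one term has Frobenius norm $\|\bm{A}\bm{u}_k\|_2\,\|\bm{v}_k\|_2 = \|\bm{A}\bm{u}_k\|_2 \le \|\bm{A}\|_{op}$, so $\|\bm{A}\bm{B}\|_F \le \|\bm{A}\|_{op}\sum_k \sigma_k = \|\bm{A}\|_{op}\|\bm{B}\|_*$, recovering the same bound while making the appearance of $\|\bm{B}\|_*$ transparent.

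For the symmetric inequality $\|\bm{A}\bm{B}\|_F \le \|\bm{A}\|_*\|\bm{B}\|_{op}$, I would invoke the invariance of all three norms under transposition: $\|\bm{A}\bm{B}\|_F = \|(\bm{A}\bm{B})^\top\|_F = \|\bm{B}^\top\bm{A}^\top\|_F$, then apply the already-proved inequality to the product $\bm{B}^\top \bm{A}^\top$ to obtain $\|\bm{B}^\top\bm{A}^\top\|_F \le \|\bm{B}^\top\|_{op}\|\bm{A}^\top\|_* = \|\bm{B}\|_{op}\|\bm{A}\|_*$. There is no genuine obstacle here; the statement is a standard operator/nuclear norm inequality, and the only point requiring mild care is verifying the rank-one Frobenius identity $\|\bm{a}\bm{b}^\top\|_F = \|\bm{a}\|_2\|\bm{b}\|_2$ (equivalently, the columnwise operator-norm bound), which follows immediately from the definitions.
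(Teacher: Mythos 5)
Your proof is correct, and it takes a genuinely different route from the paper, which offers no derivation at all: the paper simply remarks that the lemma is an instance of H\"older's inequality for Schatten norms (the Frobenius, operator, and nuclear norms being the Schatten $2$-, $\infty$-, and $1$-norms, respectively). Your first chain, $\|\bm{A}\bm{B}\|_F \le \|\bm{A}\|_{op}\|\bm{B}\|_F \le \|\bm{A}\|_{op}\|\bm{B}\|_*$, supplies exactly the elementary content behind that citation---the columnwise gain bound is the Schatten--H\"older instance $(p,q)=(\infty,2)$, and the second step is the paper's own Lemma~\ref{lemma:nuclear_fnorm}---and your transposition argument for the symmetric form is valid because singular values, hence all three norms, are invariant under transposition. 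Your SVD route is equally correct and arguably preferable: besides making the appearance of $\|\bm{B}\|_*$ transparent, it upgrades with no extra work to the stronger instance $\|\bm{A}\bm{B}\|_* \le \|\bm{A}\|_{op}\|\bm{B}\|_*$ (each rank-one term $(\bm{A}\bm{u}_k)\bm{v}_k^\top$ has nuclear norm equal to its Frobenius norm, so the same triangle-inequality computation goes through with $\|\cdot\|_*$ on the left), and it is a nuclear-norm bound of this kind, not the stated Frobenius form, that the proof of Theorem~\ref{theorem:ole_lob} actually invokes when it bounds $\|\bm{\widetilde{U}}\bm{\widetilde{Z}}\|_*$ by $\|\bm{\widetilde{Z}}\|_{op}\cdot\|\bm{\widetilde{U}}\|_*$. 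In short: the paper's citation buys brevity and situates the lemma in the general Schatten-norm family, while your argument buys self-containedness from Lemmas~\ref{lemma:nuclear_triangle} and~\ref{lemma:nuclear_fnorm} plus the rank-one identity $\|\bm{a}\bm{b}^\top\|_F=\|\bm{a}\|_2\|\bm{b}\|_2$, and your SVD variant even closes the small gap between the lemma as stated and the form used downstream.
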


Lemma~\ref{lemma:nuclear_opnorm} is an instance of the H\"older's inequality for Schatten norms.



\begin{lemma}\label{lemma:op_fnorm}
    $\|\bm{X}\|_{op} \leq \|\bm{X}\|_F$.
\end{lemma}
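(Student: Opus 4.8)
The plan is to reduce both norms to the singular values of $\bm{X}$ and then invoke a trivial coordinate-wise comparison. Let $\sigma_1 \geq \sigma_2 \geq \cdots \geq \sigma_r > 0$ denote the nonzero singular values of $\bm{X}$, where $r$ is its rank. The operator norm is by definition the largest singular value, $\|\bm{X}\|_{op} = \sigma_1$, and the Frobenius norm admits the standard characterization $\|\bm{X}\|_F = \sqrt{\sum_{i=1}^r \sigma_i^2}$, which one obtains from $\|\bm{X}\|_F^2 = \mathrm{tr}(\bm{X}^\top \bm{X})$ together with the fact that the eigenvalues of $\bm{X}^\top \bm{X}$ are exactly the $\sigma_i^2$. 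Thus the first step is simply to recall these two SVD-based identities.

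Given these identities, the conclusion is immediate: since $\sigma_1^2$ is a single summand in $\sum_{i=1}^r \sigma_i^2$ and every term is nonnegative, we have $\sigma_1^2 \leq \sum_{i=1}^r \sigma_i^2$, and taking square roots yields $\|\bm{X}\|_{op} = \sigma_1 \leq \sqrt{\sum_{i=1}^r \sigma_i^2} = \|\bm{X}\|_F$. No further machinery is needed, and in particular no appeal to the earlier lemmas (Lemmas~\ref{lemma:nuclear_triangle}--\ref{lemma:nuclear_opnorm}) is required.

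As an alternative that avoids the SVD entirely, one could argue directly from the variational definition $\|\bm{X}\|_{op} = \max_{\|\bm{u}\|_2 = 1}\|\bm{X}\bm{u}\|_2$: fix a unit vector $\bm{u}^\star$ attaining the maximum, and bound $\|\bm{X}\bm{u}^\star\|_2^2 = \sum_i (\bm{x}_i^\top \bm{u}^\star)^2$ (where $\bm{x}_i$ are the rows of $\bm{X}$) using Cauchy--Schwarz, giving $\sum_i \|\bm{x}_i\|_2^2 \|\bm{u}^\star\|_2^2 = \|\bm{X}\|_F^2$. There is essentially no genuine obstacle here, as this is a standard textbook inequality; the only point requiring minor care is stating the two norm characterizations correctly so that the single-summand comparison is transparently justified.
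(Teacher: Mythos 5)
Your proof is correct. The paper itself states Lemma~\ref{lemma:op_fnorm} without proof, treating it as a standard fact (just as it does for Lemmas~\ref{lemma:nuclear_triangle}--\ref{lemma:nuclear_opnorm}), so there is no paper argument to compare against; your SVD-based argument, $\|\bm{X}\|_{op} = \sigma_1 \leq \sqrt{\sum_{i=1}^r \sigma_i^2} = \|\bm{X}\|_F$, is the canonical proof, and your Cauchy--Schwarz alternative via the rows of $\bm{X}$ is also valid. You are also right that no appeal to the paper's other lemmas is needed, and your observation matches how the lemma is used in the proof of Theorem~\ref{theorem:ole_lob}, namely as a one-line bound $\|\bm{\widetilde{Z}}\|_{op} \leq \|\bm{\widetilde{Z}}\|_F$.
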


\begin{lemma}\label{lemma:exp_fnorm}
    For a matrix $\bm{X} \in \mathbb{R}^{m \times n}$, with each entry $\bm{X_{ij}} \overset{\text{i.i.d.}}{\sim} \mathcal{N}(0,1)$, then $\mathbb{E}(\|\bm{X}\|_F) = \sqrt{2} \cdot \frac{ \Gamma\left( \frac{mn+1}{2} \right) }{ \Gamma\left( \frac{mn}{2} \right) }$, where $\Gamma(\cdot)$ is the Gamma function.
\end{lemma}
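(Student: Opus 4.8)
The plan is to reduce the statement to a standard moment computation for the chi distribution. First I would observe that, by definition, $\|\bm{X}\|_F^2 = \sum_{i=1}^m \sum_{j=1}^n X_{ij}^2$, and since the $mn$ entries $X_{ij}$ are i.i.d. $\mathcal{N}(0,1)$, this is a sum of $mn$ independent squared standard Gaussians. Hence $\|\bm{X}\|_F^2 \sim \chi^2_{mn}$, so $\|\bm{X}\|_F = \sqrt{\chi^2_{mn}}$ follows a chi distribution with $k := mn$ degrees of freedom. The entire problem therefore collapses to computing the mean of a chi-distributed random variable, and the matrix structure plays no role beyond supplying the count $mn$ of i.i.d. entries.

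To compute this mean directly, I would write out the chi density and evaluate the first moment. The chi distribution with $k$ degrees of freedom has density $f(x) = \frac{1}{2^{k/2 - 1}\Gamma(k/2)}\, x^{k-1} e^{-x^2/2}$ for $x \ge 0$, so $\mathbb{E}[\|\bm{X}\|_F] = \frac{1}{2^{k/2-1}\Gamma(k/2)} \int_0^\infty x^{k} e^{-x^2/2}\, dx$. The key step is the substitution $u = x^2/2$, which converts this integral into a Gamma integral: it yields $\int_0^\infty x^k e^{-x^2/2}\, dx = 2^{(k-1)/2}\,\Gamma\!\left(\tfrac{k+1}{2}\right)$, where I identify $\int_0^\infty u^{(k-1)/2} e^{-u}\, du = \Gamma\!\left(\tfrac{k+1}{2}\right)$.

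Substituting back and simplifying the powers of two gives $\mathbb{E}[\|\bm{X}\|_F] = \sqrt{2}\cdot \frac{\Gamma((k+1)/2)}{\Gamma(k/2)}$, and setting $k = mn$ yields exactly the claimed formula. Because this is a classical moment identity, there is no genuine obstacle; the only point requiring care is the bookkeeping of the normalizing constant and the powers of two through the substitution, so that the factor $\sqrt{2}$ emerges cleanly. As an alternative, one could simply cite the known closed form for the mean of the chi distribution, but I would prefer to include the short self-contained derivation above for completeness.
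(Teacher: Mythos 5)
Your proposal is correct and follows essentially the same route as the paper: both identify $\|\bm{X}\|_F$ as chi-distributed with $mn$ degrees of freedom and then invoke the mean of the chi distribution. The only difference is that the paper cites the closed-form mean directly, whereas you derive it via the substitution $u = x^2/2$ --- a harmless (and verifiably correct) elaboration of the same argument.
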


\begin{proof}
    The Frobenius norm of the specified matrix follows Chi distribution with $mn$ degrees of freedom, i.e.,  $\|X\|_F = \sqrt{\sum_{i=1}^m \sum_{j=1}^n \bm{X}_{ij}^2} \sim \chi_{mn}$. Hence $\mathbb{E}(\|X\|_F) = \sqrt{2} \cdot \frac{ \Gamma\left( \frac{mn+1}{2} \right) }{ \Gamma\left( \frac{mn}{2} \right) }$
\end{proof}

We now describe the notations used in our theorem. We assume latent representations $\bm{h}_i\in\mathbb{R}^{d\times 1}$ for each class $c\in \{0,1\}$ are drawn from the union of two noisy low-rank Gaussian distributions. We model conditional distribution of $\bm{h}_i\mid y_i=c$ as $\mathcal{N}(\bm{0}, \bm{\Sigma}_c)$ with covariance  $\bm{\Sigma}_c = \bm{U}_c \bm{U}_c^\top + \sigma^2 \bm{I}_d$. Here, $\bm{U}_c \in \mathbb{R}^{d \times r}$ is an orthonormal basis matrix representing the low-dimensional subspace of class $c$ with rank $r < d$, and $\sigma^2 \bm{I}_d$ represents isotropic noise. Hence we can denote $\bm{h}_i = \bm{U}_c \bm{z} + \bm{\varepsilon}$, with $\bm{z} \sim \mathcal{N}(0, \bm{I}_r)$ and noise $\bm{\varepsilon} \sim \mathcal{N}(0, \sigma^2 \bm{I}_d)$.
Let matrix $\bm{M}_c \in \mathbb{R}^{d \times n}$ be the class-specific latent representations for $n$ samples from class $c$, with each column the representation of one sample. 
Let $\bm{M} = [\bm{M}_0 \;\; \bm{M}_1] \in \mathbb{R}^{d \times 2n}$ denote the concatenated matrix of samples from both classes. The OLE for this sample matrix becomes:
\[
\text{OLE}(\bm{M}_0, \bm{M}_1) = \|\bm{M}_0\|_* + \|\bm{M}_1\|_* - \|\bm{M}\|_*,
\]
where $\|\cdot\|_*$ denotes the nuclear norm.
For any column unit vectors $\bm{u}_i$ of $\bm{U_0}$ and $\bm{v}_j$ in $\bm{U}_1$, denote their angle as $\theta_{ij}$, i.e., $\bm{u}_i^\top \bm{v}_j = \cos \theta_{ij}$, $i,j \in [r].$ Suppose that the cosine similarity of each pair of unit basis can be bounded by:
\[
\cos \tilde{\theta} \leq \cos \theta_{ij} \leq \ell \cos \tilde{\theta},
\]
where $\ell$ measures the range of pairwise cosine similarities, $\tilde{\theta} \in [0, \frac{\pi}{2}]$ is the largest pairwise angle.
Now we can prove Theorem~\ref{theorem:ole_lob}.

\begin{proof}

Denote $\bm{M_c} =\bm{U_c} \bm{Z_c} + \bm{E_c}$, where $\bm{Z_c} \in \mathbb{R}^{r \times n}$ is the matrix of coordinate terms, $\bm{E_c} \in \mathbb{R}^{d \times n}$ is the matrix of noises terms. 
We first lower bound class-specific matrix $\|\bm{M}_c\|_*$ by:

\begin{align*}
    \|\bm{M}_c\|_* & \geq \|\bm{U}_c \bm{Z}_c\|_* - 
    \|-\bm{E}_c\|_* 
    \quad (\text{Lemma}~\ref{lemma:nuclear_triangle})
    \\
    & \geq \|\bm{U}_c \bm{Z}_c\|_F
    - \sqrt{n}\|\bm{E}_c\|_F
    \quad (\text{Lemma}~\ref{lemma:nuclear_fnorm}) \\
    & = \|\bm{Z}_c\|_F - \sqrt{n}\|\bm{E}_c\|_F
\end{align*}

Then we upper bound $\bm{M}$ by the following:

\begin{align*}
\| \bm{M} \|_* &= \|[\bm{U}_0 \bm{Z}_0 + \bm{E}_0 \quad \bm{U}_1 \bm{Z}_1 + \bm{E}_1] \|_* \\
&\leq \|\underbrace{[\bm{U}_0 \;\; \bm{U}_1]}_{\bm{\widetilde{U}}}
\underbrace{\begin{bmatrix}
    \bm{Z}_0 \\
    \bm{Z}_1
\end{bmatrix}}_{\bm{\widetilde{Z}}} \|_*
+ \| \underbrace{[\bm{E}_0 \;\; \bm{E}_1]}_{\bm{\widetilde{E}}} \|_* 
\quad (\text{Lemma}~\ref{lemma:nuclear_triangle})
\\
&\leq \|\bm{\widetilde{Z}}\|_{op} \cdot \|\bm{\widetilde{U}} \|_*
+ \sqrt{2n} \| \bm{\widetilde{E}} \|_F 
\quad (\text{Lemma}~\ref{lemma:nuclear_opnorm},\text{Lemma}~\ref{lemma:nuclear_fnorm})
\\
&\leq \|\bm{\widetilde{Z}}\|_{F} \cdot \|\bm{\widetilde{U}} \|_*
+ \sqrt{2n} \| \bm{\widetilde{E}} \|_F 
\quad (\text{Lemma}~\ref{lemma:op_fnorm})
\\
&\leq \|\bm{\widetilde{Z}}\|_{F} \cdot \left( \sqrt{2n} \sqrt{\overline{\cos\theta}} + \sqrt{2n(2n - 1)} \cdot \sqrt{1 - \overline{\cos\theta}} \right) \\
&+ \sqrt{2n} \| \bm{\widetilde{E}} \|_F 
\quad (\text{Lemma}~\ref{lemma:nuclear_cos})
\\
&\leq \|\bm{\widetilde{Z}}\|_{F} \cdot \left(\sqrt{2n} \sqrt{ \ell {\cos\tilde{\theta}}} + \sqrt{2n(2n - 1)} \cdot \sqrt{1 - {\cos\tilde{\theta}}}\right) \\
&+ \sqrt{2n} \| \bm{\widetilde{E}} \|_F 
\end{align*}

Putting everything together, we derive:
\begin{align*}
&\text{OLE}(\bm{M}_0, \bm{M}_1) \geq \sum_{c \in \{0,1\}} 
\left( 
\|\bm{Z}_c\|_F - \sqrt{n}\|\bm{E}_c\|_F
\right)\\
&- (
\|\bm{\widetilde{Z}}\|_{F} \cdot (\sqrt{2n} \sqrt{ \ell {\cos\tilde{\theta}}} + \sqrt{2n(2n - 1)} \cdot \sqrt{1 - {\cos\tilde{\theta}}}) \\
&+ \sqrt{2n} \| \bm{\widetilde{E}} \|_F
    ) 
\end{align*}

Denote $\gamma(n) = \sqrt{2} \cdot \frac{ \Gamma\left( \frac{n+1}{2} \right) }{ \Gamma\left( \frac{n}{2} \right) }$, where $\Gamma(\cdot)$ is the Gamma function. According to Lemma~\ref{lemma:exp_fnorm}, we have:

\begin{align*}
&\mathbb{E}(\text{OLE}(\bm{M}_0, \bm{M}_1)) 
\geq 2\gamma(rn) - 
2\sqrt{n}\gamma(dn) \\
&\qquad -2\gamma(rn) \cdot \left(\sqrt{2n} \sqrt{ \ell {\cos\tilde{\theta}}} + \sqrt{2n(2n - 1)} \cdot \sqrt{1 - {\cos\tilde{\theta}}}\right) \\
&\qquad - \sqrt{2n} \cdot \gamma(2dn) \\
&= \underbrace{2\gamma(rn) - 
2\sqrt{n}\gamma(dn) -
\sqrt{2n} \gamma(2dn)}_{C_1} \\
& \qquad - \underbrace{2\gamma(rn)}_{C_2} \cdot
\left(
\underbrace{\sqrt{2n} \sqrt{ \ell {\cos\tilde{\theta}}} + \sqrt{2n(2n - 1)} \cdot \sqrt{1 - {\cos\tilde{\theta}}}}_{\phi(\widetilde{\theta})}
\right)
\end{align*}

Over the interval $[0, \arccos \frac{1}{2n}]$, when $\tilde{\theta}$ decreases, which indicates that subspaces of each class are less orthogonal, the function $\phi(\tilde{\theta})$ is strictly decreasing. Therefore, the right-hand side lower bound is increasing. When $n \to \infty$, $\arccos \frac{1}{2n} \to \frac{\pi}{2}$, hence this conclusion holds for most $\theta$.

\end{proof}

\section{Proof of Theorem~\ref{theorem:score}}
\label{proof:score}

We first propose the following lemmas:

\begin{lemma}\label{lemma:sherman}
    \textbf{(Sherman–Morrison–Woodbury Formula)}
    $(\bm{A}+\bm{UV^\top})^{-1} = \bm{A}^{-1} - \bm{A}^{-1} \bm{U} (\bm{I} + \bm{V^\top\bm{A}^{-1}\bm{U}})^{-1}\bm{V}^\top\bm{A}^{-1}$
\end{lemma}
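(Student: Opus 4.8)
The plan is to prove the identity by direct verification. Since $\bm{A}+\bm{U}\bm{V}^\top$ is square, it suffices to exhibit a right inverse: if the candidate on the right-hand side multiplies $\bm{A}+\bm{U}\bm{V}^\top$ to give $\bm{I}$, then for square matrices it is automatically the two-sided inverse. Throughout I would make explicit the standing hypotheses that $\bm{A}$ and the ``capacitance'' matrix $\bm{S} \coloneqq \bm{I} + \bm{V}^\top \bm{A}^{-1} \bm{U}$ are both invertible, since these are exactly the conditions under which every inverse appearing in the statement is well defined, and the identity is meaningless otherwise.

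First I would abbreviate the candidate inverse as $\bm{B} \coloneqq \bm{A}^{-1} - \bm{A}^{-1}\bm{U}\bm{S}^{-1}\bm{V}^\top\bm{A}^{-1}$ and expand the product $(\bm{A}+\bm{U}\bm{V}^\top)\bm{B}$ into two pieces by distributing. The first piece is immediate: $(\bm{A}+\bm{U}\bm{V}^\top)\bm{A}^{-1} = \bm{I} + \bm{U}\bm{V}^\top\bm{A}^{-1}$.

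The key algebraic step — really the only substantive observation in the whole argument — is to simplify the factor $(\bm{A}+\bm{U}\bm{V}^\top)\bm{A}^{-1}\bm{U}$ occurring in the second piece. Distributing gives $\bm{U} + \bm{U}\bm{V}^\top\bm{A}^{-1}\bm{U} = \bm{U}\bigl(\bm{I}+\bm{V}^\top\bm{A}^{-1}\bm{U}\bigr) = \bm{U}\bm{S}$, so that the second piece becomes $\bm{U}\bm{S}\,\bm{S}^{-1}\bm{V}^\top\bm{A}^{-1} = \bm{U}\bm{V}^\top\bm{A}^{-1}$, where the $\bm{S}\bm{S}^{-1}$ cancellation is precisely what the capacitance matrix was constructed to produce. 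Subtracting this from the first piece yields $\bm{I} + \bm{U}\bm{V}^\top\bm{A}^{-1} - \bm{U}\bm{V}^\top\bm{A}^{-1} = \bm{I}$, which completes the verification.

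There is no genuine analytic obstacle here; the result is a short identity check, and the ``hard part'' is purely the bookkeeping of noncommuting factors together with keeping the invertibility hypotheses on $\bm{A}$ and $\bm{S}$ in view. If one wished to avoid even assuming $\bm{A}$ is invertible, an alternative route is to verify the equivalent block factorization of $\left[\begin{smallmatrix}\bm{A} & -\bm{U}\\ \bm{V}^\top & \bm{I}\end{smallmatrix}\right]$ via its two Schur complements and read the stated formula off the $(1,1)$ block; but for the present application the direct one-line multiplication above is the cleanest.
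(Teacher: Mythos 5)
Your proof is correct and complete. The paper states this lemma without proof, treating the Sherman--Morrison--Woodbury identity as a standard fact used in the derivation of $\bm{\Sigma}_c^{-1}$, so there is no in-paper argument to compare against; your direct verification is exactly the canonical one. The substantive step is the one you identify: $(\bm{A}+\bm{U}\bm{V}^\top)\bm{A}^{-1}\bm{U} = \bm{U}(\bm{I}+\bm{V}^\top\bm{A}^{-1}\bm{U}) = \bm{U}\bm{S}$, which triggers the $\bm{S}\bm{S}^{-1}$ cancellation, and your remark that a right inverse of a square matrix is automatically two-sided closes the only potential gap (indeed, $(\bm{A}+\bm{U}\bm{V}^\top)\bm{B}=\bm{I}$ simultaneously establishes that $\bm{A}+\bm{U}\bm{V}^\top$ is invertible, so invertibility of the left-hand side need not be assumed separately). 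Making the hypotheses on $\bm{A}$ and the capacitance matrix $\bm{S}=\bm{I}+\bm{V}^\top\bm{A}^{-1}\bm{U}$ explicit is a genuine improvement over the paper's bare statement; in the paper's application they hold trivially, since there $\bm{A}=\sigma^2\bm{I}_d$ and $\bm{U}=\bm{V}=\bm{U}_c$ with orthonormal columns, giving $\bm{S}=(1+\sigma^{-2})\bm{I}_r$, which is why the closed form $\bm{\Sigma}_c^{-1}=\frac{1}{\sigma^2}\bm{I}-\frac{1}{\sigma^2(\sigma^2+1)}\bm{U}_c\bm{U}_c^\top$ follows immediately from the formula you proved.
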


\begin{lemma}\label{lemma:det}
    \textbf{(The Matrix Determinant Lemma)} 
    $\det(\bm{A}+\bm{U}\bm{V}^\top) = \det(\bm{I}+\bm{V}^\top\bm{A}^{-1}\bm{U})\det(\bm{A})$
\end{lemma}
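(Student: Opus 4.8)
The plan is to prove this determinant identity by evaluating the determinant of a single auxiliary block matrix in two different ways, using the Schur-complement formula for block determinants. Throughout I assume $\bm{A}$ is invertible (which is implicit in the statement, since $\bm{A}^{-1}$ already appears on the right-hand side) and that the factors have compatible dimensions, say $\bm{U}, \bm{V} \in \mathbb{R}^{d \times k}$ and $\bm{A} \in \mathbb{R}^{d \times d}$, so that $\bm{A} + \bm{U}\bm{V}^\top \in \mathbb{R}^{d \times d}$ while $\bm{I} + \bm{V}^\top \bm{A}^{-1} \bm{U} \in \mathbb{R}^{k \times k}$; tracking these two distinct identity blocks is part of the bookkeeping.

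First I would introduce the bordered block matrix
\[
\bm{B} = \begin{bmatrix} \bm{A} & -\bm{U} \\ \bm{V}^\top & \bm{I}_k \end{bmatrix}.
\]
Evaluating $\det(\bm{B})$ by eliminating the bottom-right block (Schur complement of $\bm{I}_k$) gives $\det(\bm{I}_k)\det(\bm{A} - (-\bm{U})\bm{I}_k^{-1}\bm{V}^\top) = \det(\bm{A} + \bm{U}\bm{V}^\top)$. Evaluating the same determinant by eliminating the top-left block (Schur complement of $\bm{A}$, which is where invertibility of $\bm{A}$ enters) gives $\det(\bm{A})\det(\bm{I}_k - \bm{V}^\top\bm{A}^{-1}(-\bm{U})) = \det(\bm{A})\det(\bm{I}_k + \bm{V}^\top\bm{A}^{-1}\bm{U})$. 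Equating the two evaluations of $\det(\bm{B})$ yields the claim.

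To make the argument self-contained without invoking the Schur-complement determinant formula as a black box, I would instead exhibit the explicit block-LU factorization
\[
\bm{B} = \begin{bmatrix} \bm{I}_d & \bm{0} \\ \bm{V}^\top\bm{A}^{-1} & \bm{I}_k \end{bmatrix} \begin{bmatrix} \bm{A} & -\bm{U} \\ \bm{0} & \bm{I}_k + \bm{V}^\top\bm{A}^{-1}\bm{U} \end{bmatrix},
\]
whose correctness reduces to a one-line block-multiplication check (the bottom-right entry collapses to $-\bm{V}^\top\bm{A}^{-1}\bm{U} + \bm{I}_k + \bm{V}^\top\bm{A}^{-1}\bm{U} = \bm{I}_k$). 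The first, unit lower-triangular factor has determinant $1$, and the second, block upper-triangular factor has determinant $\det(\bm{A})\det(\bm{I}_k + \bm{V}^\top\bm{A}^{-1}\bm{U})$, so $\det(\bm{B})$ equals the right-hand side of the lemma; combined with the direct evaluation $\det(\bm{B}) = \det(\bm{A}+\bm{U}\bm{V}^\top)$ obtained from the other block ordering, the identity follows.

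The argument involves no hard analysis, so I do not expect a genuine obstacle; the only points requiring care are bookkeeping. Chief among them is choosing the off-diagonal block as $-\bm{U}$ (rather than $+\bm{U}$) so that both Schur complements produce the correctly signed terms $+\bm{U}\bm{V}^\top$ and $+\bm{V}^\top\bm{A}^{-1}\bm{U}$. If one wished to drop the invertibility assumption on $\bm{A}$, both sides are polynomials in the entries of $\bm{A}$, $\bm{U}$, $\bm{V}$, so the identity proved on the dense open set of invertible $\bm{A}$ would extend to all $\bm{A}$ by continuity; this is unnecessary here since $\bm{A}^{-1}$ already appears in the statement.
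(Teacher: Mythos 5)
Your proof is correct, but note that the paper does not actually prove this lemma: it is stated in Appendix~\ref{proof:score} (alongside the Sherman--Morrison--Woodbury formula, Lemma~\ref{lemma:sherman}) as a standard known identity and used as a black box, solely to conclude $\det(\bm{\Sigma}_0)=\det(\bm{\Sigma}_1)$ for $\bm{\Sigma}_c=\sigma^2\bm{I}_d+\bm{U}_c\bm{U}_c^\top$ (i.e.\ the case $\bm{A}=\sigma^2\bm{I}_d$, $\bm{U}=\bm{V}=\bm{U}_c$). So you are supplying a proof where the paper offers none, and the one you give is the classical two-way evaluation of the bordered determinant; your sign bookkeeping (the $-\bm{U}$ block), your Schur-complement computations, and your block-LU factorization all check out, with the unit lower-triangular factor contributing determinant $1$ and the block upper-triangular factor contributing $\det(\bm{A})\det(\bm{I}_k+\bm{V}^\top\bm{A}^{-1}\bm{U})$. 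For full self-containedness you would want to handle the \emph{other} evaluation the same way rather than via the Schur formula, e.g.\ through the companion factorization
\[
\begin{bmatrix}\bm{A} & -\bm{U}\\ \bm{V}^\top & \bm{I}_k\end{bmatrix}
=\begin{bmatrix}\bm{I}_d & -\bm{U}\\ \bm{0} & \bm{I}_k\end{bmatrix}
\begin{bmatrix}\bm{A}+\bm{U}\bm{V}^\top & \bm{0}\\ \bm{V}^\top & \bm{I}_k\end{bmatrix},
\]
which is an equally trivial multiplication check and gives $\det(\bm{B})=\det(\bm{A}+\bm{U}\bm{V}^\top)$ directly. One small inaccuracy in your closing aside: the right-hand side of the lemma is \emph{not} a polynomial in the entries of $\bm{A}$ --- it is undefined at singular $\bm{A}$, and even after writing $\bm{A}^{-1}$ via the adjugate, $\det(\bm{A})\det(\bm{I}_k+\bm{V}^\top\bm{A}^{-1}\bm{U})=\det(\bm{A})^{1-k}\det\bigl(\det(\bm{A})\bm{I}_k+\bm{V}^\top\operatorname{adj}(\bm{A})\bm{U}\bigr)$ retains a factor $\det(\bm{A})^{k-1}$ in the denominator for $k>1$ --- so the continuity extension should instead be phrased for the genuinely polynomial identity $\det(\bm{A}+\bm{U}\bm{V}^\top)=\det(\bm{B})$; as you yourself note, no such extension is needed for the lemma as stated, so this does not affect the validity of your proof.
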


Following notations used in Theorem~\ref{theorem:ole_lob}, we further assume the \emph{principal angles} between subspaces \( \text{span}(\bm{U}_0) \) and \( \text{span}(\bm{U}_1) \) are all equal to some angle \( \theta \in [0, \frac{\pi}{2}] \).

We first derive the Bayes optimal classifier for these two distributions. Denote the log-likelihood ratio by:
\[
g(\bm{h}) = \log \frac{p(\bm{h} \mid y = 1)}{p(\bm{h} \mid y = 0)} = \frac{1}{2} \bm{h}^\top (\bm{\Sigma}_0^{-1} - \bm{\Sigma}_1^{-1}) \bm{h} + \frac{1}{2} \log \frac{\det \bm{\Sigma}_0}{\det \bm{\Sigma}_1}
\]

According to Lemma~\ref{lemma:sherman}: 
\[
\bm{\Sigma}_c^{-1}  
= (\sigma^2 \bm{I}_d + \bm{U}_c \bm{U}_c^\top)^{-1} 
= \frac{1}{\sigma^2}\bm{I} - \frac{1}{\sigma^2 (\sigma^2+1)}\bm{U}_c \bm{U}_c^\top
\]
Hence we have:
\[
\bm{\Sigma}_0^{-1} - \bm{\Sigma}_1^{-1}
= \frac{1}{\sigma^2 (\sigma^2+1)} (\bm{U}_1 \bm{U}_1^\top - \bm{U}_0 \bm{U}_0^\top)
\]
According to Lemma~\ref{lemma:det}, $\det(\bm{\Sigma}_0) = \det(\bm{\Sigma}_1)$, hence we have:
\[
g(\bm{h}) = \frac{1}{2\sigma^2 (\sigma^2+1)} \bm{h}^\top(\bm{U}_1 \bm{U}_1^\top - \bm{U}_0 \bm{U}_0^\top) \bm{h}
\]
\[
= C_3 \left( \|\bm{P}_1 \bm{h}\|^2 - \|\bm{P}_0 \bm{h}\|^2 \right),
\]
where $C_3 \coloneqq \frac{1}{2\sigma^2 (\sigma^2+1)}$, $\bm{P}_c := \bm{U}_c \bm{U}_c^\top$. $\bm{U}_c$ for each class can be estimated through Principal Component Analysis (PCA).

The posterior probability on class 1 assigned by the classifier is:
\[
\Pr[y = 1 \mid \bm{h}] := \frac{1}{1 + e^{-g(\bm{h})}} = \varsigma(g(\bm{h})),
\]
where $\varsigma(\cdot)$ is the Sigmoid function.
We define the confidence score as the expected confidence score conditioned on \( y = 1 \):
\[
\xi(\theta) := \mathbb{E}_{\bm{h}|y=1 \sim \mathcal{N}(\bm{0}, \bm{\Sigma}_1})[\varsigma(g(\bm{h}))].
\]

Since \( \bm{h} \sim \mathcal{N}(\bm{0}, \bm{\Sigma}_1) \), we can write:
\[
\bm{h} = \bm{U}_1 \bm{z} + \sigma \bm{\epsilon}, \quad \text{with } \bm{z} \sim \mathcal{N}(\bm{0}, \bm{I}_r),\ \bm{\epsilon} \sim \mathcal{N}(\bm{0}, \bm{I}_d)
\]

The angles of column vectors between $\bm{U}_0$ and $\bm{U_1}$ are all $\theta$, hence we can have 
$\bm{U}_0^\top \bm{U}_1 = \cos \theta \cdot \bm{I}_r$. 
Since \( \bm{P}_0 \bm{h} = \bm{U}_0 \bm{U}_0^\top \bm{U}_1 \bm{z} + \sigma \bm{P}_0 \bm{\epsilon} \), we have:
\[
\|\bm{P}_0 \bm{h}\|^2 = \cos^2 \theta \cdot \|\bm{z}\|^2 + 2 \sigma \cos \theta \cdot \bm{z}^\top \bm{U}_0^\top \bm{\epsilon} + \sigma^2 \|\bm{U}_0^\top \bm{\epsilon}\|^2
\]

Since \( \bm{P}_1 \bm{h} = \bm{U}_1 \bm{z} + \sigma \bm{P}_1 \bm{\epsilon} \), we have:
\[
\|\bm{P}_1 \bm{h}\|^2 = \|\bm{z}\|^2 + 2 \sigma \bm{z}^\top \bm{U}_1^\top \bm{\epsilon} + \sigma^2 \|\bm{U}_1^\top \bm{\epsilon}\|^2
\]

Putting this together:
\begin{align*}
g(\bm{h}) &= C_3 (
\|\bm{z}\|^2 - \cos^2 \theta \cdot \|\bm{z}\|^2 \\
&\qquad+ 
\underbrace{2 \sigma \bm{z}^\top \bm{U}_1^\top \bm{\epsilon} - 2 \sigma \cos \theta \cdot \bm{z}^\top \bm{U}_0^\top \bm{\epsilon}}_{\eta_1}\\
&\qquad+ \underbrace{\sigma^2 \|\bm{U}_1^\top \bm{\epsilon}\|^2 - \sigma^2 \|\bm{U}_0^\top \bm{\epsilon}\|^2}_{\eta_2}
) \\
&= C_3 \left(
 \sin^2 \theta \cdot \|\bm{z}\|^2 + \eta_1 + \eta_2
\right)
\end{align*}

The expectation of \( g(\bm{h}) \) is:
\begin{align*}
\mathbb{E}[g(\bm{h})] &= C_3 \left(
 \sin^2 \theta \cdot \mathbb{E}[ \|\bm{z}\|^2] + \mathbb{E}[\eta_1] + \mathbb{E}[\eta_2]
 \right)
\end{align*}

$\|\bm{z}\|^2$ follows Chi-squared distribution, i.e., $\|\bm{z}\|^2 \sim \chi^2_r$, hence $\mathbb{E}[ \|\bm{z}\|^2] = r$, where $r$ is the length of vector $z$.
Term \( \eta_1 \) involves summation of products of independent zero-mean Gaussian variables, hence has zero mean.
$\|\bm{U}_c^\top \bm{\epsilon}\|^2 \sim \chi^2_r$, hence $\eta_2$ also has zero mean.

Since Sigmoid function \( \varsigma \) is concave on \( \mathbb{R}_+ \), Jensen's inequality~\citep{jensen} implies:
\begin{align*}
    \mathbb{E}[\varsigma(g(\bm{h}))] &\leq \varsigma(\mathbb{E}[g(\bm{h})]) \\
    & = C_3 \cdot \varsigma(\mathbb{E}[g(\bm{h})]) \\
    & = \frac{1}{2\sigma^2 (\sigma^2+1)} \varsigma(r \sin^2 \theta)
\end{align*}

This upper bound is monotonically increasing.
As \( \theta \to 0 \), the subspaces \( \text{span}(\bm{U}_0) \) and \( \text{span}(\bm{U}_1) \) become more aligned, and the upper bound decreases. Therefore, the expected confidence score of the Bayes optimal classifier decreases since the class subspaces become less distinguishable.

\section{Experimental details}\label{appendix:exp}

The MNIST dataset is distributed under the Creative Commons Attribution-Share Alike 3.0 license. CIFAR-10 also follows the MIT license. The CelebA dataset is available strictly for non-commercial research purposes.

We use the AdamW optimizer with a learning rate of $10^{-4}$. The number of denoising timesteps for both training and sampling is set to 1,000.
For ACU-SIMS, we select $\omega \in \{0.7, 1.4, 2.1\}$ based on the configuration that yields the lowest FID for each dataset after the first generation. Specifically, we choose $\omega = 0.7$ for MNIST, and $\omega = 1.4$ for both CIFAR-10 and CelebA.
To evaluate precision and recall, we extract feature vectors from synthetic images using a pre-trained VGG-16~\cite{vgg} classifier, following the same setup as in~\cite{improved_metric}. Regarding the neighborhood size $k$, we set $k=20$ for MNIST and $k=10$ for CIFAR-10 and CelebA.

The training, sampling, and evaluation steps for each self-consuming loop are conducted on a NVIDIA RTX A5000 GPU with 24 GB memory. The total execution time per generation varies by dataset: approximately 2 hours for MNIST and 6 hours for CelebA.

\begin{figure*}[htbp]
    \centering
    \includegraphics[width=.65\linewidth]{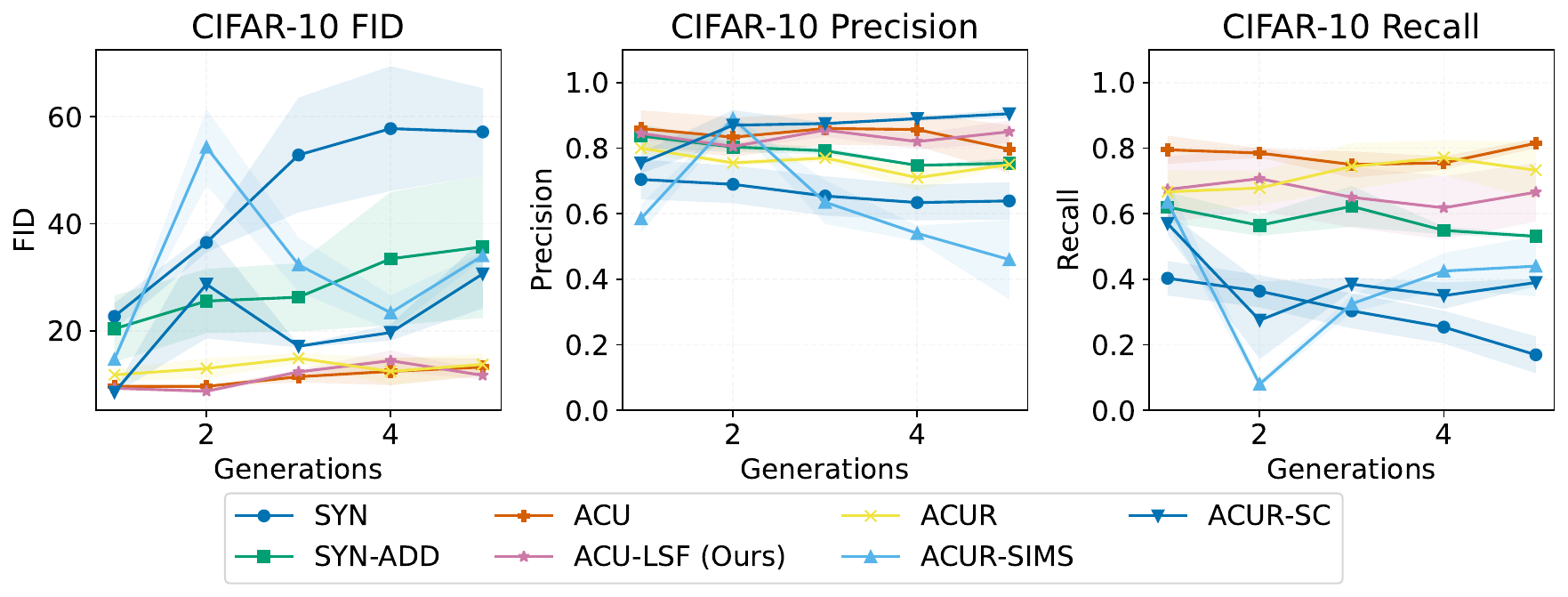}
    \caption{Performance comparison on CIFAR-10 dataset. SYN suffers from model collapse. SYN-ADD partially alleviates it via fresh real data. ACU, ACUR, and ACU-LSF maintain stable metrics across generations. ACU-LSF achieves higher fidelity than ACUR. ACUR-SIMS exhibits instability, while ACUR-SC suffers from reduced recall.}
    \label{fig:cifar10_baselines}
\end{figure*}

\begin{figure*}[htbp]
\centering
\setlength{\tabcolsep}{2pt}
\renewcommand{\arraystretch}{1.4}

\begin{tabular}{@{}>{\centering\arraybackslash}m{1cm} *{4}{>{\centering\arraybackslash}m{3.5cm}}@{}}
  & \textbf{SYN} & \textbf{ACU} & \textbf{ACU-LSF (Ours)} & \textbf{ACU-SC} \\

  \makebox[0pt][c]{\raisebox{1cm}{\rotatebox[origin=c]{90}{\textbf{Generation 1}}}}
 &
  \includegraphics[width=\linewidth]{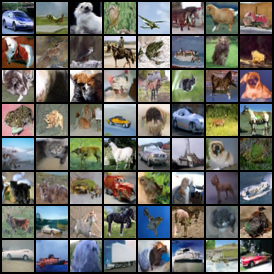} &
  \includegraphics[width=\linewidth]{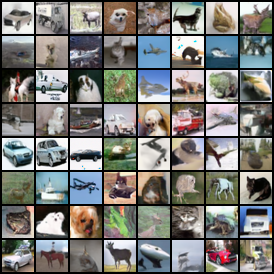} &
  \includegraphics[width=\linewidth]{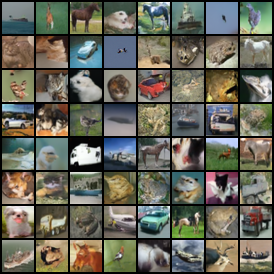} &
  \includegraphics[width=\linewidth]{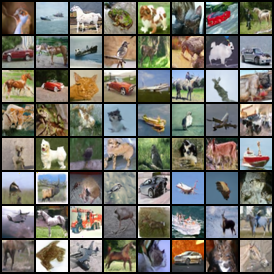} \\

  \makebox[0pt][c]{\raisebox{1cm}{\rotatebox[origin=c]{90}{\textbf{Generation 3}}}} &
  \includegraphics[width=\linewidth]{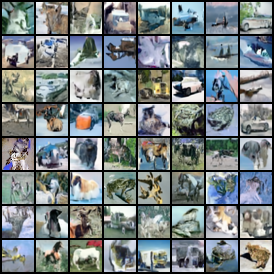} &
  \includegraphics[width=\linewidth]{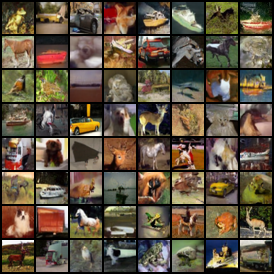} &
  \includegraphics[width=\linewidth]{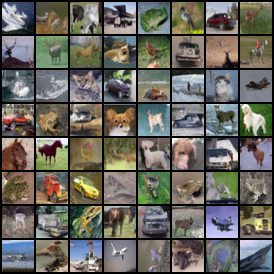} &
  \includegraphics[width=\linewidth]{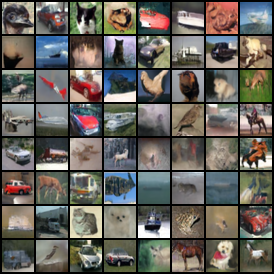} \\

  \makebox[0pt][c]{\raisebox{1cm}{\rotatebox[origin=c]{90}{\textbf{Generation 5}}}} &
  \includegraphics[width=\linewidth]{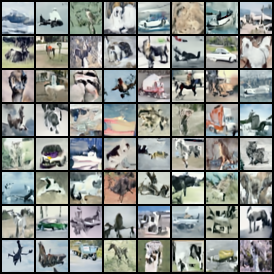} &
  \includegraphics[width=\linewidth]{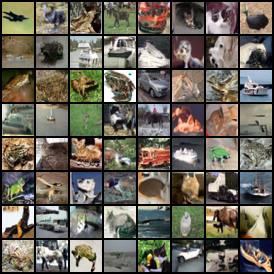} &
  \includegraphics[width=\linewidth]{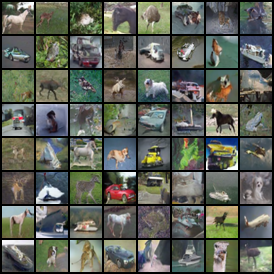} &
  \includegraphics[width=\linewidth]{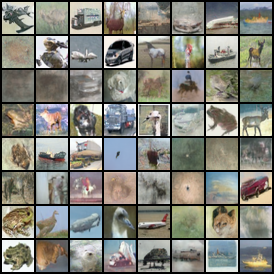} \\
\end{tabular}

\caption{Samples generated by the self-consuming loops of SYN, ACU, ACU-LSF, and ACU-SC across different generations, with the initial model trained on the CIFAR-10 dataset. SYN exhibits model collapse, reflected in declining fidelity and diversity as generation progresses. ACU maintains stable image quality. ACU-LSF achieves image quality comparable to ACU. ACU-SC generates some uniform image content with reduced diversity.}
\label{fig:cifar10_grid}
\end{figure*}

Experimental results on the CIFAR-10 dataset are shown in Fig.~\ref{fig:cifar10_baselines}. We plot the mean and standard error across three runs for each baseline. The trends are consistent with those reported in Section~\ref{sec:exp}. SYN suffers from model collapse, while ACU and SYN-ADD help mitigate it. ACUR-SIMS is less stable due to iterative extrapolation of score functions. ACUR-SC yields high precision but suffers from low recall. Although ACUR achieves higher recall than our method, our approach achieves comparable FID and better precision. Samples generated by the representative self-consuming models are shown in Figure~\ref{fig:cifar10_grid}. SYN exhibits signs of model collapse, i.e., decreasing fidelity and diversity as generation progresses. In contrast, ACU maintains stable image quality across generations. ACU-LSF achieves image quality comparable to ACU. While ACU-SC attempts to mitigate model collapse by relying on image clustering centers, this leads to homogenized image content, compromising overall diversity.

\section{Related works}\label{related}

Recent studies have shown that training generative models on their own synthetic outputs can lead to \textit{model collapse}, characterized by the forgetting of rare samples, degraded fidelity, and reduced diversity~\citep{recrusion, mad}. \citet{recrusion} observe that retraining on synthetic data progressively removes the tails of the real data distribution, a pattern observed across various generative models. Self-consuming training can also amplify pre-existing biases~\citep{news_bias_amplify} or parity gaps in downstream tasks~\citep{FairnessFeedback}, posing risks to fairness.

\citet{mad} term this collapse \textit{Model Autophagy Disorder} (MAD), where artifacts and biases are amplified, and diversity in generated data diminishes. They categorize self-consuming training into three settings: the fully synthetic loop, the synthetic augmentation loop, and the fresh-data loop. Only the latter one can consistently mitigate MAD. They further highlight that sampling bias toward high-quality generations plays a key role in the quality-diversity trade-off. \citet{stable} provide a theoretical guarantee that retraining remains stable if a sufficient portion of real data is maintained in each generation. \citet{recrusive_stability} offer generalization error bounds, showing that both model architecture and the real/synthetic data ratio impact retraining outcomes. \citet{remove} analyze real-synthetic interactions and find that discarding synthetic data can be more beneficial than merely adding additional real samples.

Beyond maintaining fresh real data, several alternative mitigations have been proposed. \citet{accumulate} accumulate all historical data during training, supported by theoretical analysis under a linear regression setting. Other methods rely on external guidance signals. \citet{sims} propose Self-IMproving diffusion models with Synthetic data (SIMS), where the drift of the retrained model negatively guided the score function closer to the real data distribution. \citet{verification} suggest that high-quality synthetic data alone can be sufficient for retraining if verified and selected appropriately. \citet{detection_model_collapse} train a machine-generated text detector and apply importance sampling to reduce model collapse. In contrast, our work leverages latent representations in diffusion models as a simpler detector. \citet{self_correction} introduce a self-correction mechanism to map synthetic samples to more plausible real-distribution counterparts. For instance, in motion synthesis, integrating physical laws yields greater stability compared to purely retraining with synthetic data.

Recent studies have also offered valuable insights into the training dynamics and latent representations of diffusion models. \citet{self_consuming_diffusion} model the impact of real data proportion in retraining diffusion models with a one-hidden-layer network as the score function. \citet{diff_cluster} demonstrate that training diffusion models under specific assumptions about training data distribution and model parameterization is equivalent to subspace clustering. \citet{semantic_latent} show that latent features of frozen diffusion models exhibit desirable properties like linearity, enabling semantic image editing. \citet{dynamic} investigate how latent representations evolve during denoising, revealing a trade-off between representation quality and generation quality. \citet{hallucination} observe that hallucinations in generated samples can be identified by high variance in the sampling trajectory, and that removing such hallucinations helps stabilize recursive training. However, variance-based filtering requires access to the sampling trajectory of each image, which is often unavailable in practical settings.

\section{Conclusions and limitations}
\label{appendix:conclusion}
As generative models increasingly rely on mixtures of human- and model-generated data, concerns arise regarding potential degradation in model quality and diversity. In this work, we address the phenomenon of model collapse caused by self-consuming training loops and propose a filtering algorithm to remove unrealistic synthetic images, thereby stabilizing the retraining process.

Our key insight is that the low-dimensional structure of latent representations extracted by diffusion models deteriorates over successive generations. We leverage this observation to assess the misalignment between collected training samples and the underlying real data distribution. By filtering out samples with poor alignment, we are able to improve stability during retraining.
We introduce a unified theoretical framework to support our observations and approach. We validate the effectiveness of our approach through empirical results on real-world datasets. Under a fixed training budget, our method  outperforms competitive baselines in image quality and stability.

\textbf{Limitations.} One limitation of our work is the assumption that image classes remain fixed throughout the self-consuming process. However, in real-world scenarios such as continual learning or unlearning, models may be retrained to acquire new classes or deliberately forget existing ones. These dynamic changes can induce distribution shifts in the latent representations, potentially affecting the stability and generalization of filtering with probing classifier. Extending our analysis to settings where the number of classes evolves over time presents a valuable direction for future research. 

\end{document}